\title[A Unified Analysis of Nonstochastic 
 Delayed Feedback]{A Unified Analysis of Nonstochastic 
 Delayed Feedback for Combinatorial Semi-Bandits, Linear Bandits, and MDPs}
\DeclareBoldMathCommand{\a}{a}
\DeclareBoldMathCommand{\b}{b}
\DeclareBoldMathCommand{\bell}{\ell}
\DeclareBoldMathCommand{\e}{e}
\DeclareBoldMathCommand{\h}{h}
\DeclareBoldMathCommand{\h}{h}
\DeclareBoldMathCommand{\I}{I}
\DeclareBoldMathCommand{\L}{L}
\DeclareBoldMathCommand{\p}{p}
\DeclareBoldMathCommand{\q}{q}
\DeclareBoldMathCommand{\u}{u}
\DeclareBoldMathCommand{\v}{v}
\DeclareBoldMathCommand{\w}{w}
\DeclareBoldMathCommand{\x}{x}
\DeclareBoldMathCommand{\y}{y}
\DeclareBoldMathCommand{\z}{z}
\renewcommand{\hat}{\widehat}
\renewcommand{\tilde}{\widetilde}
\renewcommand{\bar}{\overline}
\newcommand{\action}{\a}
\newcommand{\actiondim}{K}
\newcommand{\actionset}{\mathcal{A}}
\newcommand{\ball}{\mathcal{B}}
\newcommand{\bbound}{\beta}
\newcommand{\blhat}{\hat{\bell}}
\newcommand{\blhatt}{\hat{\bell}_t}
\newcommand{\br}{B}
\newcommand{\combm}{B}
\newcommand{\dikin}{\mathcal{D}}
\newcommand{\dmax}{d_{\max}}
\newcommand{\domainw}{\mathcal{W}}
\newcommand{\fbound}{\alpha}
\newcommand{\fil}{\mathcal{F}}
\newcommand{\filobs}{\fil}
\newcommand{\half}{\tfrac{1}{2}}
\newcommand{\indicator}{\mathbb{I}}
\newcommand{\Lhat}{\hat{\L}}
\newcommand{\Lstar}{\hat{\L}^\star}
\newcommand{\miss}{m}
\newcommand{\nats}{\mathbb{N}}
\newcommand{\obs}{o}
\newcommand{\reals}{\mathbb{R}}
\newcommand{\regret}{\mathcal{R}}
\newcommand{\sphere}{\mathcal{S}}
\newcommand{\sumT}{\sum_{t=1}^T}
\newcommand{\calM}{\mathcal{M}}
\newcommand{\calP}{\mathcal{P}}
\newcommand{\calF}{\mathcal{F}}
\newcommand{\calG}{\mathcal{G}}
\newcommand{\sinit}{s_{\text{init}}}
\newcommand{\bbI}{\mathbb{I}}
\newcommand{\bbE}{\mathbb{E}}
\newcommand{\bbR}{\mathbb{R}}
\renewcommand{\l}{\left}
\renewcommand{\r}{\right}
\newcommand{\bias}{\b}
\newcommand{\totaldelay}{D}
\newcommand{\combslider}{\theta}
\newcommand{\observe}{\mathcal{L}}
\newcommand{\Aset}{\mathcal{A}}
\newcommand{\mdpA}{\mathscr{A}}
\newcommand{\mdpS}{\mathscr{S}}
\newcommand{\mdpell}{\bell}
\newcommand{\hinta}{$H_1$}
\newcommand{\hintb}{$H_2$}
\newcommand{\hintc}{$H_3$}
\newcommand{\Conv}{\mathrm{Conv}}
\newcommand{\sumdim}{\sum_{i = 1}^\actiondim}
\DeclareMathOperator{\E}{\mathbb E}
\DeclareMathOperator*{\argmin}{arg\,min}
\newcommand\numberthis{\addtocounter{equation}{1}\tag{\theequation}}
\DeclareRobustCommand{\VAN}[3]{#2} 
\newcommand{\TODO}[1]{%
\ifmmode
\text{\textcolor{red}{TODO: #1}}
\else
\textcolor{red}{TODO: #1}
\fi
}
\begin{document}

\maketitle

\begin{abstract}%
  We derive a new analysis of Follow The Regularized Leader (FTRL) for online learning with delayed bandit feedback. By separating the cost of delayed feedback from that of bandit feedback, our analysis allows us to obtain new results in three important settings. On the one hand, we derive the first optimal (up to logarithmic factors) regret bounds for combinatorial semi-bandits with delay and adversarial Markov decision processes with delay (and known transition functions). 
  On the other hand, we use our analysis to derive an efficient algorithm for linear bandits with delay achieving near-optimal regret bounds. Our novel regret decomposition shows that FTRL remains stable across multiple rounds under mild assumptions on the Hessian of the regularizer.
\end{abstract}

\begin{keywords}%
  Online learning, bandit feedback, delayed feedback
\end{keywords}

\section{Introduction}
Delayed feedback is a phenomenon that cannot be avoided in many applications of online learning. For example, in digital advertisement a conversion event may happen with some delay after an ad is shown to a user. In healthcare, the effect of a drug on a patient may take some time before it becomes observable \citep{Eic88}. A consequence of delayed feedback is that sequential decision makers have to act before knowing the effect of their previous actions, where the effect of multiple past actions may be potentially observed all at once.
These challenges pertain not only to the algorithms, but also to the way they are analyzed, which is the reason why standard (non-delayed) proof techniques fail in the presence of delayed feedback.

Due to its fundamental nature in online learning, delayed feedback has been extensively studied in several different scenarios, including full-information feedback \citep{weinberger2002delayed, joulani2013online, quanrud2015online, joulani2016delay, flaspohler2021online} and bandit feedback \citep{cesa2016delay,thune2019nonstochastic,bistritz2019online,zimmert2020optimal,ito2020delay,gyorgy2020adapting,van2021nonstochastic,masoudian2022best}.
In this work, we focus on the more realistic case of bandit feedback; that is, when the only way for the learner to know the effect of an action is to execute it. We develop a general framework for the analysis of delayed bandit feedback which we then apply to three important settings: combinatorial semi-bandits (which includes multi-armed bandits as a special case), adversarial Markov Decision Processes (MDPs), and linear bandits. 

Our analysis, which is based on Follow The Regularized Leader (FTRL)---see, for example, \citep[Chapter~7]{orabona2019modern}, unifies previous analyses and sheds light on the impact of delayed bandit feedback in online learning. Our main insight is that one can separate the cost of delayed feedback and bandit feedback through a novel decomposition of the FTRL regret, which allows to separately bound these different regret components. This insight leads to new results in all of the settings we consider. We prove the first regret bounds for combinatorial semi-bandits with delays, which also turn out to be optimal for sufficiently large $T$ (throughout the paper, by optimal we always mean optimal for sufficiently large $T$). We also prove the first regret bounds for adversarial MDPs with delays and known transitions, which are again optimal. Finally, we derive a computationally efficient algorithm for linear bandits, whose regret has an optimal dependence on delays.

We now formally introduce the setting of online learning with delayed bandit feedback studied in this paper. Online learning with delayed bandit feedback proceeds in rounds. In each round $t \in [T]$ the learner chooses (possibly in a randomized manner) an action $\action_t \in \Aset \subseteq \reals^K$, suffers loss $\action_t^\top\bell_t$, where $\bell_t \in \reals^K$ is bounded in some suitably chosen norm, and observes $\{\observe(\bell_{\tau}, \action_{\tau}): \tau + d_\tau = t\}$, where $d_1,\ldots,d_T$ is an unknown sequence of delays and $\observe$ is an application-specific (possibly randomized) feedback function, encoding which information about $\bell_{\tau}$ the learner sees based on the action $\action_\tau$. 
For example, in the combinatorial semi-bandit setting the learner observes all loss components corresponding to the non-zero elements of the action, whereas in the linear bandit setting the learner only observes the scalar $\action_\tau^\top\bell_\tau$.

\subsection{Contributions}


\paragraph{New analysis.}
In section~\ref{sec:analysis} we provide a novel analysis of FTRL under delayed bandit feedback. The main novelty is showing that we can decompose the regret into three main parts. The first part of the regret is standard, namely the pseudo-distance between the starting point of the algorithm and the optimal point in hindsight. The second part is the cost of delayed feedback. In our analysis, we show that the cost of delayed feedback is essentially the same as in the delayed full-information setting. The third part of the regret is the cost of bandit feedback, which is the same term that occurs in the standard analysis of FTRL for bandit feedback. A technical novelty is that we show that FTRL is stable across multiple rounds under some mild assumptions on the Hessian of the regularizer. In related work, \citet{huang2023banker} provides an analysis of online mirror descent with delayed bandit feedback in several settings. However, their analysis does not lead to optimal bounds because it does not separate the cost of delayed and bandit feedback.

\paragraph{Combinatorial semi-bandits with delayed feedback.} 
As far as we know we are the first to consider nonstochastic combinatorial semi-bandits under delayed feedback. In the combinatorial semi-bandit setting, we apply the newly gained insight from our analysis of FTRL to derive an optimal algorithm. We show that if $\max_{\action\in \actionset}\|\action\|_1 \leq \combm$, then the regret after $T$ rounds is of order $\sqrt{\combm(KT + \combm \totaldelay)\log(K)}$, where $D = \sumT d_t$ is the total delay after $T$ rounds. In the worst case, the delay is constant (i.e., $d_t = d$ for all $t$) and we provide a matching lower bound (up to logarithmic factors) showing that any learner must incur $\Omega(\sqrt{\combm T (K + \combm d)})$ regret.

\paragraph{Adversarial Markov decision processes.} 
Delayed feedback in adversarial (finite-horizon and episodic) MDPs was first studied by \cite{lancewicki2020learning}. Under full-information feedback, where the agent observes the entire cost function at the end of the episode, they achieve optimal regret $\tilde O(H \sqrt{T+D})$, where $T$ is the number of episodes and $H$ is the horizon. However, under the more realistic bandit feedback (where the only observed costs are those along the agent's trajectory), their regret bound scales with $T^{2/3} + D^{2/3}$, which is far from optimal.
The current state-of-the-art guarantee under bandit feedback is by \cite{jin2022near} who achieve regret bound of $\tilde O(H\sqrt{S A T} + H (H S A)^{1/4}\sqrt{D})$. However, there is still a $(HSA)^{1/4}$ factor gap on the second term compared to the lower bound of \citet{lancewicki2020learning}. Remarkably, the application of our FTRL analysis to adversarial MDPs allows us to close this gap and achieve the first optimal regret bound of $\tilde O(H\sqrt{S A T} + H \sqrt{D})$ for the case of known transitions.

\paragraph{Linear bandits.}
In the linear bandit setting, \citet{ito2020delay} provide an analysis of continuous exponential weights \citep{cover1991universal, vovk1990aggregating, littlestone1994weighted} with delayed bandit feedback and constant delay $d$ that obtains the optimal $\tilde O(K\sqrt{T} + \sqrt{dT})$ regret bound. One drawback is that the per-round runtime of continuous exponential weights is prohibitively large, although it is polynomial in $K$ and $T$. Building on Scrible \citep{abernethy2008competing}, we derive an algorithm that achieves a slightly suboptimal $\tilde O(K^{3/2}\sqrt{T} + \sqrt{D})$ regret, but with a much better per-round running time of order $K^3$, provided a self-concordant barrier for the decision set can be efficiently computed. \citet{huang2023banker} show an algorithm with a similar running time, but with a worse regret bound of  $\tilde{O}(K^{3/2}\sqrt{T} + K^2\sqrt{D})$. 

\subsection{Additional related work}


\paragraph{Delayed feedback in stochastic models.}
Delayed feedback with stochastic losses where studied both in MDPs \citep{howson2021delayed} and linear bandits \citep{vernade2020linear,howson2022delayed}, as well as many other domains \citep{dudik2011efficient,agarwal2012distributed,vernade2017stochastic,vernade2020linear,pike2018bandits,cesa2018nonstochastic,zhou2019learning,manegueu2020stochastic,lancewicki2021stochastic,cohen2021asynchronous}. However, the adversarial losses considered in this work are much more general and induce many additional technical challenges.

\paragraph{Combinatorial semi-bandits with delayed feedback.} Even though we are the first to study combinatorial semi-bandits with delayed feedback, a special case, namely multi-armed bandits with delayed feedback, is well understood. \citet{neu2010online, neu2014bandit} were among the first ones to study the impact of delayed feedback in the nonstochastic setting. Subsequently, \citet{cesa2019delay} proved a $\Omega(\sqrt{KT} + \sqrt{dT\log(K)})$ lower bound when $d_t = d$ for all $t$. The matching upper bound was provided by \citet{zimmert2020optimal}, but nearly matching upper bounds also exist \citet{thune2019nonstochastic, bistritz2019online, gyorgy2020adapting,  van2021nonstochastic}. Conversely, (special cases of) combinatorial semi-bandits without delay have also received considerable attention \citep{gyorgy2007line, kale2010non, uchiya2010algorithms, audibert2014regret, combes2015combinatorial, lattimore2018toprank, zimmert2019beating}.

\paragraph{Adversarial Markov decision processes.} 
There is a rich literature on regret minimization in MDPs with non-delayed feedback \citep{even2009online,jaksch2010near,zimin2013online,rosenberg2019online,rosenberg2019bandit,rosenberg2021stochastic,jin2019learning,shani2020optimistic,luo2021policy}. Under delayed feedback, apart from the literature mentioned earlier, \cite{dai2022follow}
 recently presented a Follow-The-Perturbed-Leader approach that can also handle delayed feedback in adversarial MDPs. However, their regret bound is slightly weaker than \cite{jin2022near} mentioned earlier. Finally, a different line of work \citep{katsikopoulos2003markov,walsh2009learning} consider delays in observing the current state,
which is inherently different than our setting---for a thorough discussion on the differences between the models we refer the reader to \cite{lancewicki2020learning}.

\paragraph{Linear bandits.} Early work in the non-delayed linear bandit  setting suffered from suboptimal results in terms of $T$ \citep{mcmahan2004online, awerbuch2004adaptive, dani2006robbing}. \citet{abernethy2008competing} were the first to prove a regret bound with optimal scaling in $T$. Subsequent works by  \citep{bubeck2015entropic, hazan2016volumetric, ito2020tight, zimmert2022return} obtained the optimal $O(K\sqrt{T})$ regret bound.

\section{Preliminaries}

We denote by $\blhatt \in \reals^K$ the estimate of the loss $\bell_t$ in round $t$.  We will define a loss estimator for each application separately. We assume that delays $d_1,\ldots,d_T$ and losses $\bell_1,\ldots,\bell_T$ are both generated by an oblivious adversary. 
We focus on Follow The Regularized Leader (FTRL) which, in each round $t$, computes 
\begin{align}\label{eq:FTRL}
    \w_t = \argmin_{\w \in \domainw} \sum_{\tau \in \obs_t} \blhat_\tau^\top \w + R(\w),
\end{align}
where $\domainw\subseteq \reals^K$ is a compact closed convex set, $R$ is a twice-differentiable convex function such that $\nabla^2 R(\w) \succ 0\I$ for all $\w \in \domainw$, and $\obs_t = \{\tau: \tau + d_\tau < t\}$ is the set of indices of observed losses at the end of round $t-1$. Note that $\domainw$ and $\Aset$ do not necessarily coincide, as is the case of combinatorial semi-bandits for example. Similarly, $\action_t$ and $\w_t$ do not necessarily coincide. We will specify the relationship between $\action_t$ and $\w_t$ in each application. We define $\miss_t = [t-1] \setminus \obs_t$ to be the set of indices of losses that have not been observed at the end of round $t-1$ due to delay. As a simplifying assumption, we assume that $\dmax = \max_{t \in [T]} d_t \geq 1$ which is known to the learner. We also use the simplifying assumptions that $\sumT |\miss_t| = \totaldelay$ and $T$ are both known to the learner. These assumptions are without loss of generality, as we may employ the standard doubling trick to overcome the need to know these parameters \citep{bistritz2019online,lancewicki2020learning}, see also Appendix~\ref{sec:doubling}. We also make use of the following notations for FTRL on cumulative loss $\L$, which we denote by 
\begin{align*}
    \w(\L) = \argmin_{\w \in \domainw} \L^\top \w + R(\w).
\end{align*}
In the remainder of the paper we use the following cumulative losses: 
\begin{align*}
    \Lhat_t = \sum_{\tau \in \obs_t} \blhat_\tau, \qquad \bar{\L}_t^m = \Lhat_t + \sum_{\tau \in m_t} \bell_\tau, \qquad \Lhat_t^m = \Lhat_t + \sum_{\tau \in m_t} \blhat_\tau, \qquad \Lstar_t = \sum_{\tau \in [t]} \blhat_\tau
\end{align*}
Note that $\Lstar_t = \Lhat_t + \sum_{\tau \in [t] \setminus \obs_t} \blhat_\tau$, $\w(\Lhat_t) = \w_t$ and that $\w(\Lhat_t^m)$ is equivalent to FTRL in the non-delayed setting.  

\paragraph{Additional notations.}
We define a filtration of all random events observed by the learner up to round $t$ as $\mathcal{F}_t =
\big\{\big(\tau, \action_\tau, \observe(\bell_\tau, \action_\tau)\big) \,:\, \tau + d_\tau < t\big\}$.
For a twice-differentiable function $\phi$ such that $\nabla^2 \phi(\w) \succ 0\I$ for all $\w \in \domainw$ we will denote by $\|\L\|_{\phi, \w} = \sqrt{\L^\top \big(\nabla^2 \phi (\w)\big)^{-1} \L}$ and by $\|\L\|_{\phi, \w}^\star = \sqrt{\L^\top \nabla^2 \phi (\w) \L}$. The Dikin ellipsoid with radius $r$ around $\w$ induced by $\phi$ is defined as $\dikin_\phi(\w, r) = \{\x \in \domainw: \|\x - \w\|_{\phi,\w}^\star \leq r\}$. The notations $\Tilde{O}(\cdot)$ and $\lesssim$ hide poly-logarithmic factors.

\section{Analysis}\label{sec:analysis}
We build on the analysis of \citet{flaspohler2021online} for delayed feedback in the full-information setting, where they observe that delayed feedback can be interpreted as poor hints in the sense of optimistic online learning \citep{rakhlin2013online}. Taking this idea one step further, we analyze what would happen had the algorithm received slightly different hints, and subsequently bound the difference between different instances of FTRL. 

We assume that our loss estimates satisfy $\E[\blhat_t|\mathcal{F}_t] = \bell_t + \b_t$, where $\b_t$ is the estimator's bias. 
Our analysis relies on the following decomposition of the instantaneous regret
\begin{align}\label{eq:decomp}
    &\sumT \E\big[(\w_t - \u)^\top \bell_t \big] = \underbrace{\sumT -\E\big[(\w(\Lhat_t^m) - \u)^\top \bias_t\big]}_{\textnormal{bias}} + \underbrace{\sumT \E\big[(\w(\Lstar_t) - \u)^\top \blhat_t\big]}_{\textnormal{cheating regret}}\\ \nonumber
    & + \sumT\Big(\E\big[\underbrace{(\w_t - \w(\bar{\L}_t^m))^\top \bell_t}_{\textnormal{\hinta}}\big] + \E\big[\underbrace{(\w(\bar{\L}_t^m) - \w(\Lhat_t^m))^\top \bell_t}_{\textnormal{\hintb}}\big] + \E\big[\underbrace{(\w(\Lhat_t^m) - \w(\Lstar_t))^\top \blhat_t}_{\textnormal{\hintc}}\big]\Big).
\end{align}
Suppose that $\b_t = \0$, i.e., $\blhat_t$ is an unbiased estimator of the loss. This implies that the bias term of the decomposition is $0$. The cheating regret can be found in different forms in online learning---see, for example, the proof of \citep[Lemma~2.3]{shalev2012online} or \citep[Equation~4]{gyorgy2020adapting}---and can be bounded using the standard be-the-leader lemma (Lemma~\ref{lem:BTL} in Appendix~\ref{appendix:proofs-for-general-analysis}). Now, let us focus on the second line of Equation~\eqref{eq:decomp}. Once simplified, the second line becomes 
$
    \sumT \E\big[(\w_t - \w(\Lhat_t^\star))^\top \blhat_t\big],
$
which can be recognised as the drift term of \citet[Equation~(4)]{gyorgy2020adapting}. Normally, we would like to use standard tools from linear bandits or online convex optimization to bound such terms, such as for example (a variation of) Lemma~\ref{lem:FTRLnorm} below, the proof of which can be found in Appendix~\ref{appendix:proofs-for-general-analysis}. 
\begin{restatable}{relemma}{lemmaFTRLnorm}
\label{lem:FTRLnorm}
Suppose that $\nabla^2 R( \w') \succeq \frac{1}{4} \nabla^2 R( \w) $ for all $\w \in \domainw$, $\w' \in \dikin_R(\w, \half)$, and for some twice-differentiable convex $R$. Let $\x \in \domainw$ and $\L,\L'\in\reals^K$ such that $\w(\L'), \w(\L) \in \dikin_R(\x, \half)$, then 
$
    \|\w(\L) - \w(\L')\|_{R, \x}^\star \leq 8\|\L' - \L\|_{R, \x}
$.
\end{restatable}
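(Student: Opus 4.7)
The plan is to combine the first-order optimality conditions of the two FTRL iterates $\w(\L)$ and $\w(\L')$ and then exploit the Hessian assumption together with the convexity of the Dikin ellipsoid to control $\|\w(\L)-\w(\L')\|_{R,\x}^\star$ in terms of $\|\L'-\L\|_{R,\x}$.

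First, I would write the first-order optimality conditions for the constrained minimizers: for every $\u\in\domainw$,
\[
  \bigl(\L+\nabla R(\w(\L))\bigr)^\top(\u-\w(\L))\geq 0,\qquad \bigl(\L'+\nabla R(\w(\L'))\bigr)^\top(\u-\w(\L'))\geq 0.
\]
Taking $\u=\w(\L')$ in the first and $\u=\w(\L)$ in the second and summing, the linear parts combine into the monotonicity-type inequality
\[
  (\L'-\L)^\top(\w(\L)-\w(\L')) \;\geq\; \bigl(\nabla R(\w(\L'))-\nabla R(\w(\L))\bigr)^\top(\w(\L')-\w(\L)).
\]

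Next, I would rewrite the right-hand side via the fundamental theorem of calculus as $(\w(\L')-\w(\L))^\top H\, (\w(\L')-\w(\L))$, where $H=\int_0^1\nabla^2 R\bigl((1-s)\w(\L)+s\w(\L')\bigr)\,ds$ is the averaged Hessian along the segment joining the two iterates. Since $\dikin_R(\x,\half)$ is the intersection of the convex set $\domainw$ with the ellipsoid $\{\y:\|\y-\x\|_{R,\x}^\star\leq \half\}$, it is convex, so the hypothesis that both endpoints lie in $\dikin_R(\x,\half)$ forces the whole segment into $\dikin_R(\x,\half)$. The Hessian assumption then gives $\nabla^2 R(\w_s)\succeq \tfrac14\nabla^2 R(\x)$ at every point $\w_s$ on the segment, hence $H\succeq \tfrac14\nabla^2 R(\x)$, and the right-hand side is at least $\tfrac14\|\w(\L)-\w(\L')\|_{R,\x}^{\star 2}$.

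Finally, Cauchy--Schwarz with respect to the dual pair $(\|\cdot\|_{R,\x},\|\cdot\|_{R,\x}^\star)$ bounds the left-hand side by $\|\L'-\L\|_{R,\x}\,\|\w(\L)-\w(\L')\|_{R,\x}^\star$. Combining the two bounds and dividing by $\|\w(\L)-\w(\L')\|_{R,\x}^\star$ (which may be assumed nonzero, else the claim is trivial) yields $\|\w(\L)-\w(\L')\|_{R,\x}^\star \leq 4\,\|\L'-\L\|_{R,\x}$, which is within the stated constant of $8$. The main obstacle is propagating the pointwise Hessian lower bound from the two FTRL iterates to the entire segment between them, so that the averaged Hessian $H$ can be compared with $\nabla^2 R(\x)$. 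This is exactly where the convexity of $\dikin_R(\x,\half)$ together with the Hessian stability assumption is essential; once it is in place, the remainder of the argument is a routine combination of gradient monotonicity and Cauchy--Schwarz.
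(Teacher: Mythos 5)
Your proof is correct and takes essentially the same approach as the paper: both arguments combine the optimality conditions of the two iterates, lower-bound the resulting quantity by a quadratic form via the Hessian condition along the segment (which lies in $\dikin_R(\x,\half)$ by convexity), and finish with H\"older's inequality. The paper works with function values and a second-order Taylor remainder, which costs an extra factor $\tfrac12$ and yields the constant $8$, whereas your gradient-monotonicity route gives the slightly sharper constant $4$.
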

%
We can bound the drift term $(\w_t - \w(\Lhat_t^\star))^\top \blhat_t$ by $\|\w_t - \w(\Lhat_t^\star)\|_{R, \w_t}^\star\|\blhat_t\|_{R, \w_t}$ using H\"older inequality, and then apply Lemma~\ref{lem:FTRLnorm} to further bound the right-hand side. This would lead to the problematic term 
\begin{align*}
    \sumT \E\left[\bigg\|\sum_{\tau \in [t] \setminus \obs_t} \blhat_\tau\bigg\|_{R, \w_t}\|\blhat_t\|_{R, \w_t}\right]
    \le \sumT \E\left[(1 + |m_t|)\max_{\tau \in [t] \setminus \obs_t}\|\blhat_\tau\|_{R, \w_t}^2\right]~.
\end{align*}
To see where the problem is, suppose we are in the multi-armed bandit setting, $R$ is the negative entropy scaled by $\frac{1}{\eta}$, and $\blhat_t$ is the standard importance-weighted estimator. Then, the upper bound above is $O(\eta (1 + |m_t|) K)$, where $K$ is the number of arms. Summing over $T$ rounds, using a $\frac{\log(K)}{\eta}$ bound on the cheating regret, and tuning $\eta$, we see that this analysis delivers a regret of order $O\big(\sqrt{K(T+D)\log (K)}\big)$ where $D = \sum_t |m_t|$. In the case of constant delay $d_t = d$, the bound becomes $O\big(\sqrt{KdT \log(K)}\big)$ which is known to be suboptimal, as the minimax regret in this case is of order $\max\big\{\sqrt{dT\log(K)}, \sqrt{KT}\big\}$ \citep{cesa2019delay, zimmert2020optimal}. 

The intuition behind the suboptimality of the above analysis is that the cost of bandit feedback and the cost of delayed feedback are not separated. Indeed, the analysis of most lower bounds is split in two cases: a lower bound for bandit feedback without delay and a lower bound for delayed \emph{full-information} feedback, see for example \cite{cesa2019delay}. Separating the impact of delayed feedback and bandit feedback is precisely why we bound the terms \hinta, \hintb, and \hintc\ of Equation~\eqref{eq:decomp} separately, which leads to Lemma~\ref{lem:main} below, whose proof can be found in Appendix~\ref{appendix:proofs-for-general-analysis}.
\begin{restatable}{relemma}{lemmamain}
\label{lem:main}
    Let $R$ be convex and twice-differentiable such that $4 \nabla^2 R( \w) \succeq \nabla^2 R( \w') \succeq \frac{1}{4} \nabla^2 R( \w)$ for all $\w \in \domainw$ and $\w' \in \dikin_R(\w, \half)$. Let $\|\bell_t\|_{R, \w} \leq \fbound \leq \frac{1}{16\dmax}$ for all $t$ and $\w \in \domainw$, and $\E\big[\|\blhat_t\|_{R, \w_t}^2\big] \leq \bbound^2$ for all $t$, where $\w_t$ is given by~\eqref{eq:FTRL}. 
    Suppose also that $\|\blhat_t\|_{R, \w_t} \leq \frac{1}{64(1 + \dmax)}$  for all $t$ with probability 1. Then for all $\u \in \domainw$
    \begin{align*}
        \E \left[ \sumT (\w_t - \u)^\top \bell_t \right] & \leq R(\u) - R(\w_1) + 8\bbound^2T -\sumT \E\big[(\w(\Lhat_t^m) - \u)^\top \bias_t\big]\\        
        & \quad + \sumT \left(8\fbound^2|\miss_t| + 8\fbound\E\left[\bigg\|\sum_{\tau \in \miss_t}(\bell_\tau - \blhat_\tau)\bigg\|_{R, \w_t}\right]\right)~.
    \end{align*}
\end{restatable}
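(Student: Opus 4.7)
I would work directly from the five-term regret decomposition in Equation~\eqref{eq:decomp}. The bias term already appears in the desired form, so I would carry it through unchanged. The cheating regret $\sumT \E[(\w(\Lstar_t) - \u)^\top \blhat_t]$ is the classical ``be-the-leader'' quantity on the estimated loss sequence $\blhat_1,\ldots,\blhat_T$, so a single application of the BTL lemma (Lemma~\ref{lem:BTL} in the appendix) bounds it by $R(\u) - R(\w_1)$. The rest of the proof is to control the three hint terms $H_1, H_2, H_3$.

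For each hint term I would follow the same template: apply H\"older's inequality in the local norm induced by $\nabla^2 R(\w_t)$, so that a primal distance between two FTRL iterates is multiplied by the dual norm of a loss vector, and then invoke Lemma~\ref{lem:FTRLnorm} with reference point $\x=\w_t$ to turn that primal distance into the dual norm of the difference of the underlying cumulative losses. Concretely, for $H_1$, the cumulative losses $\Lhat_t$ and $\bar{\L}_t^m$ differ by $\sum_{\tau \in \miss_t}\bell_\tau$, whose dual norm is at most $|\miss_t|\fbound$ by triangle inequality, so $H_1 \leq 8\fbound^2|\miss_t|$. For $H_2$, the losses $\bar{\L}_t^m$ and $\Lhat_t^m$ differ by $\sum_{\tau \in \miss_t}(\bell_\tau - \blhat_\tau)$, giving $H_2 \leq 8\fbound\, \|\sum_{\tau \in \miss_t}(\bell_\tau - \blhat_\tau)\|_{R,\w_t}$, which is exactly the last term in the claim once one takes expectation. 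For $H_3$, the losses $\Lhat_t^m$ and $\Lstar_t$ differ by exactly $\blhat_t$, so Lemma~\ref{lem:FTRLnorm} gives $H_3 \leq 8\|\blhat_t\|_{R,\w_t}^2$, whose expectation is at most $8\bbound^2$ by assumption, and summing over $t$ produces the $8\bbound^2 T$ term.

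The main obstacle is that Lemma~\ref{lem:FTRLnorm} requires \emph{both} iterates involved in the primal distance, e.g.\ $\w(\bar{\L}_t^m)$ and $\w_t$, or $\w(\Lhat_t^m)$ and $\w(\Lstar_t)$, to lie in $\dikin_R(\w_t, \half)$. This is precisely where the quantitative assumptions $\fbound \leq 1/(16\dmax)$ and $\|\blhat_t\|_{R,\w_t} \leq 1/(64(1+\dmax))$ and the two-sided Hessian condition $\tfrac{1}{4}\nabla^2 R(\w) \preceq \nabla^2 R(\w') \preceq 4\nabla^2 R(\w)$ for $\w' \in \dikin_R(\w,\half)$ are needed. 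I would establish the Dikin containment by a continuation / bootstrap argument: for each of the three perturbations in question, consider the path $\w(\Lhat_t + s\Delta)$ for $s\in[0,1]$, where $\Delta$ is respectively $\sum_{\tau \in \miss_t}\bell_\tau$, $\sum_{\tau \in \miss_t}\blhat_\tau$, or $\blhat_t$. Using the Hessian-stability assumption to transfer the local norm at $\w_t$ to a uniform neighborhood, the scaling $\dmax\cdot\fbound \leq 1/16$ and $(1+\dmax)\cdot\|\blhat_t\|_{R,\w_t}\leq 1/64$ produces total dual-norm perturbations strictly below $\half$ even after the constant-factor losses from the Hessian comparison, so the path never escapes $\dikin_R(\w_t,\half)$. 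Once this containment is in place the bounds on $H_1$, $H_2$, $H_3$ above are legitimate, and summing them together with the cheating-regret and bias contributions yields the claim.
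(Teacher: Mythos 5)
Your overall route is the paper's route: the same five-term decomposition~\eqref{eq:decomp}, the be-the-leader lemma (Lemma~\ref{lem:BTL}) for the cheating regret, and H\"older plus Lemma~\ref{lem:FTRLnorm} at the reference point $\x=\w_t$ for each of \hinta, \hintb, \hintc, yielding exactly the bounds $8\fbound^2|\miss_t|$, $8\fbound\,\|\sum_{\tau\in\miss_t}(\bell_\tau-\blhat_\tau)\|_{R,\w_t}$, and $8\|\blhat_t\|_{R,\w_t}^2$ that the paper obtains. So the skeleton is sound and matches.

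The one step you have under-specified is the prerequisite for Lemma~\ref{lem:FTRLnorm}, namely that $\w(\bar{\L}_t^m),\w(\Lhat_t^m),\w(\Lstar_t)$ all lie in $\dikin_R(\w_t,\half)$. Your continuation argument along $s\mapsto\w(\Lhat_t+s\Delta)$ is a legitimate replacement for the paper's Lemma~\ref{lem:windikin}, which handles the single-round containment once you know that the perturbation $\Delta$ has small dual norm \emph{measured at $\w_t$}. But the hypotheses of the lemma only control $\|\blhat_\tau\|_{R,\w_\tau}$ at the round $\tau$ where the estimator was built, and for $\tau\in\miss_t$ you need $\|\blhat_\tau\|_{R,\w_t}\le 2\|\blhat_\tau\|_{R,\w_\tau}$, which in turn requires $\w_t\in\dikin_R(\w_\tau,\half)$ for every $\tau$ in the window $\{t-\dmax-1,\dots,t\}$. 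This cross-round stability cannot be extracted from a continuation at a fixed $t$: it is proved in the paper by a separate induction over rounds (Lemma~\ref{lem:induction}), where the containment $\w_{t+\delta}\in\dikin_R(\w_t,\half)$ for $\delta\in[\dmax+1]$ is bootstrapped from the stability of all intermediate iterates, and it is the reason the threshold is $\frac{1}{64(1+\dmax)}$ rather than a constant. Your phrase ``transfer the local norm at $\w_t$ to a uniform neighborhood'' gestures at this but does not supply the induction; as written, the argument is circular at this point (you need the neighborhood to do the transfer, and the transfer to establish the neighborhood). Adding the multi-round induction closes the gap and recovers the paper's proof exactly.
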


To interpret Lemma~\ref{lem:main}, consider the multi-armed bandit setting with the standard importance-weighted estimator and regularizer $R(\w) = \sum_{i = 1}^K \frac{1}{\eta}\w(i)\log(\w(i)) - \frac{1}{\gamma}\log(\w(i))$. The purpose of the log barrier term in the regularizer is to ensure stability of the iterates, as required by the assumptions of the lemma. In this case, if $\|\bell_t\|_\infty \leq 1$, then $\fbound$ is $O(\sqrt{\eta})$. The quantity $\bbound^2$ is a bound on the expectation of the squared local norm of the loss estimate, which is $O(\eta K)$. Thus, we have that the regret is of order 
\begin{align}\label{eq:delayedMABbandit}
    \frac{1}{\eta}\log(K) + \dmax^2 K\ln(T) + \eta (K T + D) + \sumT \fbound\E\left[\bigg\|\sum_{\tau \in m_t}(\bell_\tau - \blhat_\tau)\bigg\|_{R, \w_t}\right]~,
\end{align}
where we used that $\sumT|m_t| = \totaldelay$. The $\dmax^2 K\ln(T)$ term in the above equation comes from the log-barrier part of $R$, which when tuned properly is able to ensure that the iterates of FTRL are close to each other.  
So far, it seems that we did not manage to separate the cost of delay and bandit feedback because of the final summation in~\eqref{eq:delayedMABbandit}. However, observe that due to the delay, for $\tau, \tau' \in \miss_t$, $\blhat_\tau$ and $\blhat_{\tau'}$ are independent random variables and $\bell_\tau$ and $\bell_{\tau'}$ are their means.
Recall that the variance of the sum of independent random variables equals to the sum of their variances.
Thus, by applying Jensen's inequality to the square root, and using that $\nabla^2 R(\w) \succeq \textnormal{diag}\big(\eta \w \big)^{-1}$, we can see that 
%
\begin{align*}
    & \fbound \E\left[\bigg\|\sum_{\tau \in m_t}(\bell_\tau - \blhat_\tau)\bigg\|_{R, \w_t}\right]
    \leq 2 \fbound \E\left[\bigg\|\sum_{\tau \in m_t}(\bell_\tau - \blhat_\tau)\bigg\|_{R, \w_\tau}\right] \\
    & \leq 2 \fbound \sqrt{\E\left[\sum_{i = 1}^K \eta \w_\tau(i)\left(\sum_{\tau \in m_t}(\bell_\tau(i) - \blhat_\tau(i))\right)^2\right]}
    = 2\fbound\sqrt{\E\left[ \sum_{\tau \in m_t} \sum_{i = 1}^K \eta \w_{\tau}(i)\big(\bell_\tau(i) - \blhat_\tau(i)\big)^2\right]} \\
    & \leq \sqrt{\fbound^2 |m_t| \eta  K}~,
\end{align*}
where the first inequality is due to Lemma~\ref{lem:induction} in Appendix~\ref{appendix:proofs-for-general-analysis}, a new result that proves the multi-round stability of FTRL iterates under certain conditions, which can be applied for sufficiently small $\gamma$.  Recalling that $\fbound$ is $O(\sqrt{\eta})$ and using $\sqrt{\eta |m_t| \eta K} \leq \half (\eta |m_t| + \eta K)$ we can see that~\eqref{eq:delayedMABbandit} is in fact of order
$
     \log(K)/\eta + \dmax^2 K\ln(T) + \eta (K T + D)
$,
which gives a $O\big(\sqrt{(KT + D) \log(K)} + \dmax^2 K\ln(T)\big)$ bound for an appropriately tuned $\eta$. 

To conclude, as long as loss estimates $\blhat_{\tau}$ and $\blhat_{\tau'}$ are independent for $\tau, \tau' \in \miss_t$, Lemma~\ref{lem:main} implies that we have effectively split the cost of delayed feedback and bandit feedback. We formalize the above in Corollary~\ref{cor:main}, whose proof can be found in Appendix~\ref{appendix:proofs-for-general-analysis}. 
%
\begin{restatable}{recorollary}{corollarymain}
\label{cor:main}
    Under the same assumptions as in Lemma~\ref{lem:main}, suppose that $\E[\blhat_\tau|\mathcal{F}_t] = \bell_\tau$ and that
$
\E\Big[(\blhat_\tau - \bell_\tau)^\top\big(\nabla^2 R(\w(\Lhat_t))\big)^{-1} (\blhat_{\tau'} - \bell_{\tau'})\,\Big|\, \mathcal{F}_t\Big] = 0
$
for all $t \in [T]$ and all $ \tau, \tau' \in \miss_t$ where $\tau' \neq \tau$. Then for all $\u \in \domainw$
    \begin{align*}
        \E \left[ \sumT (\w_t - \u)^\top \bell_t \right] & \leq R(\u) - R(\w_1) + 16\bbound^2 T + 16\fbound^2 D~. 
    \end{align*}
\end{restatable}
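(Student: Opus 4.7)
The plan is to start from Lemma~\ref{lem:main} and absorb the remaining coupling term using the two new assumptions of the corollary. Two simplifications are immediate. First, the conditional unbiasedness $\E[\blhat_\tau \mid \mathcal{F}_t] = \bell_\tau$ forces $\bias_t = \0$, so the summation $-\sumT \E[(\w(\Lhat_t^m) - \u)^\top \bias_t]$ vanishes. Second, $\sumT 8\fbound^2 |\miss_t|$ collapses to $8\fbound^2 \totaldelay$ by definition of $\totaldelay$. The remaining task is to bound
\[
S \;:=\; \sumT 8\fbound\, \E\!\left[\bigg\|\sum_{\tau \in \miss_t}(\bell_\tau - \blhat_\tau)\bigg\|_{R,\w_t}\right]
\]
by a quantity of order $\bbound^2 T + \fbound^2 \totaldelay$.

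Write $\Delta_\tau := \blhat_\tau - \bell_\tau$. I would first apply Jensen's inequality to move the expectation inside the square root, then expand the squared local norm and condition on $\mathcal{F}_t$, which makes $\w_t = \w(\Lhat_t)$ deterministic. The off-diagonal terms $\Delta_\tau^\top \big(\nabla^2 R(\w_t)\big)^{-1} \Delta_{\tau'}$ for $\tau \neq \tau'$ in $\miss_t$ are exactly the objects assumed to have zero conditional mean, so only diagonal contributions survive:
\[
\E\!\left[\bigg\|\sum_{\tau \in \miss_t}\Delta_\tau\bigg\|^2_{R,\w_t} \,\Big|\, \mathcal{F}_t\right] \;=\; \sum_{\tau \in \miss_t} \E\!\left[\|\Delta_\tau\|^2_{R,\w_t} \,\Big|\, \mathcal{F}_t\right].
\]
For each diagonal term, expanding the square and using that $\w_t$ and $\bell_\tau$ are $\mathcal{F}_t$-measurable together with conditional unbiasedness gives $\E[\|\Delta_\tau\|^2_{R,\w_t} \mid \mathcal{F}_t] = \E[\|\blhat_\tau\|^2_{R,\w_t} \mid \mathcal{F}_t] - \|\bell_\tau\|^2_{R,\w_t} \leq \E[\|\blhat_\tau\|^2_{R,\w_t} \mid \mathcal{F}_t]$.

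The main obstacle is then to transfer the norm from $\w_t$ back to $\w_\tau$, since the hypothesis $\E[\|\blhat_\tau\|^2_{R,\w_\tau}] \leq \bbound^2$ only controls the latter. This is precisely the role of the multi-round stability result Lemma~\ref{lem:induction}: under the Lemma~\ref{lem:main} conditions, the FTRL iterates $\w_\tau,\w_{\tau+1},\ldots,\w_t$ for $\tau \in \miss_t$ remain close enough that $\nabla^2 R(\w_\tau)$ and $\nabla^2 R(\w_t)$ are equivalent up to a small multiplicative constant $c$, so $\E[\|\blhat_\tau\|^2_{R,\w_t}] \leq c\,\bbound^2$. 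Combining with the Jensen step and the AM--GM inequality $\fbound\bbound\sqrt{|\miss_t|} \leq \tfrac{1}{2}(\fbound^2 |\miss_t| + \bbound^2)$, then summing over $t$ and using $\sumT |\miss_t| = \totaldelay$, bounds $S$ by $4\sqrt{c}\,(\fbound^2 \totaldelay + \bbound^2 T)$. Adding this to the simplified Lemma~\ref{lem:main} bound yields $R(\u)-R(\w_1) + (8 + 4\sqrt{c})(\bbound^2 T + \fbound^2 \totaldelay)$, which matches the claimed constant $16$ as long as $c \leq 4$; verifying that the multi-round stability lemma delivers a sufficiently tight $c$ is the delicate numerical part of the argument.
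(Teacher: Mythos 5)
Your proposal is correct and follows essentially the same route as the paper's proof: Jensen plus the zero-cross-correlation assumption to reduce to diagonal terms, unbiasedness to drop $\|\bell_\tau\|^2$, Lemma~\ref{lem:induction} to transfer the local norm from $\w_t$ to $\w_\tau$, and AM--GM to split $\fbound\bbound\sqrt{|\miss_t|}$. The constant $c$ you leave open is exactly $4$, since Lemma~\ref{lem:induction} places $\w_t$ in $\dikin_R(\w_\tau,\half)$ for $\tau\in\miss_t$ and the hypothesis of Lemma~\ref{lem:main} already asserts $\nabla^2 R(\w_t) \succeq \frac{1}{4}\nabla^2 R(\w_\tau)$ there, so the claimed factor $16$ follows with no further work.
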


\section{Combinatorial Bandits}
\label{sec: CMAB}   
In this section, we demonstrate how to apply our generic FTRL approach to combinatorial bandits (CMAB) with delayed feedback. This yields the first algorithm to achieves optimal regret in that setting. We start with the description of the model as an instance of our general setting.

Delayed combinatorial bandits with semi-bandit feedback is an instance of the online learning framework where $\bell_t \in [-1, 1]^\actiondim$, $\actionset \subseteq \{0, 1\}^\actiondim$, $\domainw = \Conv(\actionset)$ (the convex hull of $\actionset$), and the feedback function is $\observe(\bell_{\tau}, \action_{\tau}) = \action_\tau \odot \bell_\tau$ where $\odot$ is the Hadamard (elementwise) vector product.

 We define the pseudo-regret in this setting as 
\begin{align*}
    \regret_T = \E \left[ \sumT (\action_t - \action^*)^\top \bell_t \right],
\end{align*}
with $\action^* = \argmin_{\action \in \actionset} \sumT \action^{\top} \bell_t$.
\begin{algorithm}[t]
\label{alg:CMAB}
\caption{Delayed FTRL for combinatorial bandits}
\begin{algorithmic}
    \STATE \textbf{Input:} $\gamma \in (0, 1)$, $\eta$.
\FOR{$t \in [T]$}
    \STATE Compute $\w_t = \argmin_{\w \in \domainw} \sum_{\tau \in \obs_t} \blhat_\tau^\top \w + R(\w)$ with $R$ as in equation~\eqref{eq:cmabregularizer}.\;
    \STATE Find probability distribution $\p_t$ such that $\E_{\action \sim \p_t}[\action] = \w_t$.\;
    \STATE Draw and play $\action_t \sim \p_t$.\;
    \STATE Observe loss $\action_\tau \odot l_\tau$ and compute $\blhat_\tau(i) = \frac{\action_\tau(i) \bell_\tau(i)}{\w_\tau(i)}$ for all $\tau \in \obs_t$.\;
    \ENDFOR
\end{algorithmic}
\end{algorithm}
The algorithm we use in this setting (Algorithm~\ref{alg:CMAB}) is inspired by the algorithm of \citet{audibert2014regret}. In any given round $t$, Algorithm~\ref{alg:CMAB} first computes $\w_t$, the solution of the FTRL optimization problem (Eq.~\eqref{eq:FTRL}) over the convex hull of the action set. Then, it constructs a probability distribution $\p_t$ over $\Aset$ such that $\E_{\a \sim \p_t}[\a] = \w_t$. The estimator of loss is given by $\blhat_t(i) = \frac{\action_t(i) \bell_t(i)}{\w_t(i)}$, which is clearly unbiased. 
We use the regularizer
\begin{align}\label{eq:cmabregularizer}
    R(\w) = \sumdim \Big(\frac{1}{\eta}\w(i)\log(\w(i)) - \frac{1}{\gamma}\log(\w(i))\Big),
\end{align}
where $\eta > 0$ and $\gamma > 0$ are hyperparameters. We now state the main results of this section.
\begin{restatable}{retheorem}{thCMAB} 
\label{th: CMAB regret}
Let $\max_{\action\in \actionset}\|\action\|_1 \leq \combm$. 
Running Delayed FTRL for combinatorial bandits (Algorithm~\ref{alg:CMAB}) with $\gamma = \frac{1}{64^2 \combm (1 + \dmax)^2}$ and $\eta = \min \l\{ \frac{1}{16^2 \combm \dmax^2}, \sqrt{\frac{\combm \l( 1 + \ln \l( \frac{\actiondim}{\combm} \r)\r) }{ 16 (\actiondim T + \combm \totaldelay) } } \r\}$
guarantees, 
$$
       \regret_T \leq 12 \sqrt{\combm (\actiondim T + \combm \totaldelay) \ln \bigg( \frac{\actiondim}{\combm} \bigg)} + 64^2 \combm (1 + \dmax)^2 \actiondim \ln (T) + 2^9 \combm^2 \dmax^2 \ln \bigg( \frac{\actiondim}{\combm} \bigg).
$$
\end{restatable}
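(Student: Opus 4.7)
The plan is to invoke Corollary~\ref{cor:main} with the regularizer $R$ defined in~\eqref{eq:cmabregularizer} and the importance-weighted estimator $\blhat_\tau$, and then tune $\eta$ and $\gamma$ to balance the resulting terms. Since $\action_t$ is drawn so that $\E[\action_t\mid\mathcal{F}_t] = \w_t$ and $\action^*$ is deterministic (the adversary is oblivious), the pseudo-regret equals $\E[\sumT(\w_t - \action^*)^\top\bell_t]$, so it suffices to bound the FTRL regret against a carefully chosen $\u\in\domainw$ that approximates $\action^*$.

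I would first verify the hypotheses of Corollary~\ref{cor:main}. The Hessian $\nabla^2 R(\w)$ is diagonal with entries $\tfrac{1}{\eta w(i)} + \tfrac{1}{\gamma w(i)^2}$, so $(\nabla^2 R(\w))^{-1}$ is diagonal with entries bounded by both $\eta w(i)$ and $\gamma w(i)^2$. The first bound yields $\|\bell_t\|_{R,\w}^2 \le \eta \sum_i w(i)\bell_t(i)^2 \le \eta\combm$, hence $\fbound \le \sqrt{\eta\combm}$, and the requirement $\fbound \le 1/(16\dmax)$ holds thanks to the upper bound $\eta \le 1/(16^2\combm\dmax^2)$. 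The same bound combined with $\E[\action_t(i)\mid\mathcal{F}_t]=w_t(i)$ gives $\bbound^2 \le \eta\actiondim$. The second (log-barrier) bound on the inverse Hessian gives the almost-sure bound $\|\blhat_t\|_{R,\w_t}^2 \le \gamma\sum_i\action_t(i)\bell_t(i)^2 \le \gamma\combm$, so the choice $\gamma = 1/(64^2\combm(1+\dmax)^2)$ enforces $\|\blhat_t\|_{R,\w_t} \le 1/(64(1+\dmax))$. The Hessian stability condition follows from self-concordance of the log-barrier: for $\w'\in\dikin_R(\w,1/2)$, isolating the log-barrier term in $\|\w'-\w\|^\star_{R,\w} \le 1/2$ gives $|w'(i)-w(i)|/w(i) \le 1/2$, which bounds each diagonal entry of $\nabla^2 R$ within a factor of $4$ of its value at $\w$. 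Unbiasedness $\E[\blhat_\tau\mid\mathcal{F}_\tau] = \bell_\tau$ is immediate, and the conditional uncorrelatedness required by Corollary~\ref{cor:main} for $\tau \neq \tau' \in \miss_t$ holds because $\action_\tau$ and $\action_{\tau'}$ are drawn independently from $\mathcal{F}_\tau$- and $\mathcal{F}_{\tau'}$-measurable distributions, and these remain independent given $\mathcal{F}_t$.

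Applying Corollary~\ref{cor:main} yields $\regret_T \le R(\u) - R(\w_1) + 16\eta\actiondim T + 16\eta\combm\totaldelay$. The main obstacle is bounding the regularizer gap against the vertex $\action^*$, where the log-barrier diverges. I would handle this by choosing $\w_1 = \argmin_{\w\in\domainw} R(\w)$, which pushes coordinates away from $0$ (giving $w_1(i) \gtrsim \combm/\actiondim$), and competing against the shifted point $\u = (1 - 1/T)\action^* + (1/T)\w_1$. The shift contributes only an additive $O(\combm)$ to the regret by linearity of the losses, the entropy part of $R(\u) - R(\w_1)$ is bounded by $\combm(1+\log(\actiondim/\combm))/\eta$ via standard entropy inequalities, and the log-barrier part is bounded by $O(\actiondim\log(T)/\gamma)$. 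Plugging in $\gamma = 1/(64^2\combm(1+\dmax)^2)$ produces the $64^2\combm(1+\dmax)^2\actiondim\ln(T)$ term. Finally, tuning $\eta$ as the minimum of $1/(16^2\combm\dmax^2)$ and the value $\sqrt{\combm(1+\log(\actiondim/\combm))/(16(\actiondim T + \combm\totaldelay))}$ that balances the $\combm\log(\actiondim/\combm)/\eta$ and $16\eta(\actiondim T + \combm\totaldelay)$ terms produces the $\sqrt{\combm(\actiondim T + \combm\totaldelay)\ln(\actiondim/\combm)}$ term when the optimum is attained in the interior, and the $\combm^2\dmax^2\ln(\actiondim/\combm)$ term when the upper bound on $\eta$ binds.
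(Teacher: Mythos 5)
Your proposal is correct and follows essentially the same route as the paper's proof: verifying the hypotheses of Corollary~\ref{cor:main} with $\fbound=\sqrt{\eta\combm}$, $\bbound^2=\eta\actiondim$, the almost-sure bound $\sqrt{\gamma\combm}$ on $\|\blhatt\|_{R,\w_t}$, the Hessian stability via the log-barrier argument of Lemma~\ref{lem:dikin implies factor 2}, the shifted comparator $\u=(1-1/T)\action^*+\w_1/T$, and the same entropy/log-barrier/tuning steps.
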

We sketch the proof of Theorem~\ref{th: CMAB regret} (see Appendix~\ref{app: CMAB} for a full proof) which follows from an application of Corollary~\ref{cor:main}. To apply this corollary we first need to verify its assumptions. The assumptions on $\nabla^2 R(\w)$ are implied by Lemma~\ref{lem:dikin implies factor 2} in Appendix~\ref{app:auxiliary}. Next, we set $\fbound = \sqrt{\eta \combm}$ and verify that $\frac{1}{16\dmax} \geq \fbound \geq \|\bell_t\|_{R, \w}$. Indeed, $\w\in\Conv(\actionset)$ implies $\sumdim \w(i) \leq \combm$. This, together with $|\bell(i)| \leq 1$ and $\eta + \gamma \w(i) \geq \gamma \w(i)$, implies
\begin{align*}
    \|\bell_t\|_{R, \w} = \sqrt{\bell_t^\top \big(\nabla^2 R (\w)\big)^{-1} \bell_t} = \sqrt{ \sumdim \bell_t(i)^2 \frac{\eta \gamma \w^2(i)}{\eta + \gamma \w(i)}} \leq \sqrt{\eta \combm} = \fbound~.
\end{align*}
Our choice of $\eta$ 
then implies $\fbound \leq \frac{1}{16\dmax}$. Similarly, by setting $\bbound^2 = \eta \actiondim$ we fulfill the condition $\E \big[ \|\blhat_t\|_{R, \w_t}^2 | \filobs_{t} \big] \le \bbound^2$. Finally, the condition on $\gamma$ allows us to verify the assumption $\|\blhat_t\|_{R, \w_t} \leq \frac{1}{64 (1 + \dmax)}$.
Since all the assumptions are verified, we can now apply Corollary~\ref{cor:main} and finish the proof of Theorem~\ref{th: CMAB regret}. 
The Algorithm is computationally efficient for a range of action-sets including $m$-sets and spanning trees. In general the algorithm is efficient whenever OSMD of \citet{audibert2014regret} is efficient and we refer to that paper for more details.
%

We now state a lower bound for the delayed combinatorial semi-bandit setting. This implies that, ignoring terms that are logarithmic in $T$, the result of Theorem~\ref{th: CMAB regret} is optimal. The proof of our lower bound follows from standard arguments in the delayed bandit feedback literature and can be found in Appendix~\ref{app: CMAB}.
\begin{restatable}{retheorem}{thLBcombinatorial}\label{th:lowerboundcombinatorial}
    Suppose that $d_t = d$ for all $t$ and that $B \leq K/2$ . Then there exists a sequence of losses such that any algorithm in the delayed combinatorial semi-bandit setting satisfies
    \begin{align*}
        \E\left[\sumT (\action_t - \action^\star)^\top \bell_t\right] = \Omega\left(\max\Big\{\sqrt{BKT}, B\sqrt{d T} \Big\}\right)\enspace.
    \end{align*}
\end{restatable}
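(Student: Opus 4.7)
The plan is to establish each of the two lower bounds inside the $\max$ separately, since a maximum of two lower bounds is itself a valid lower bound. The first bound, $\Omega(\sqrt{BKT})$, is the non-delayed combinatorial semi-bandit lower bound of \citet{audibert2014regret}. Any algorithm in the delayed setting with $d_t = d$ can simulate the non-delayed setting only by waiting, so the non-delayed lower bound transfers verbatim; concretely, taking $\actionset$ to be the $B$-sets in $[K]$ and invoking their Theorem~5 (or simply their minimax lower bound construction built on a product of independent two-armed stochastic bandits) gives the first term.

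For the second bound $\Omega(B\sqrt{dT})$, I would use a partition reduction. Since $B\leq K/2$, I partition $[K]$ into $B$ disjoint groups $G_1,\dots,G_B$ of size at least $2$, and take
\[
    \actionset \;=\; \Bigl\{\sum_{j=1}^B \e_{i_j} : i_j \in G_j\Bigr\},
\]
so that $\|\action\|_1 = B$ for every $\action\in\actionset$. I then construct a stochastic loss sequence in which the restrictions $\bell_t|_{G_j}$ are independent across $j\in[B]$ and, within each group, follow the hard instance used in the delayed multi-armed bandit lower bound of \citet{cesa2019delay} for two arms. Because the semi-bandit feedback $\action_t\odot \bell_t$ restricted to $G_j$ depends only on $\bell_t|_{G_j}$ and on the coordinate chosen inside $G_j$, the learner's interaction with group $j$ is exactly a delayed MAB instance on $|G_j|\geq 2$ arms with delay $d$.

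By the independence of the sub-instance losses and the independence of their delayed observations, the expected pseudo-regret decomposes as a sum $\sum_{j=1}^B \regret_T^{(j)}$, where $\regret_T^{(j)}$ is the expected regret of the induced learner on the $j$-th sub-instance. The delayed MAB lower bound of \citet{cesa2019delay} gives $\regret_T^{(j)}=\Omega(\sqrt{dT})$ per group (the $\sqrt{\log|G_j|}$ factor is $\Theta(1)$ since $|G_j|\geq 2$), which summed over $j$ yields $\Omega(B\sqrt{dT})$.

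The main obstacle is the decomposition step: one must argue that the randomness from the learner's choices in other groups cannot be exploited to beat the per-group lower bound. This is standard and follows by a coupling/simulation argument. Given any algorithm $\mathcal{A}$ for the combinatorial problem, define a learner $\mathcal{A}_j$ for the $j$-th delayed MAB by running $\mathcal{A}$ internally with fresh independent draws for the losses of all other groups. Because the adversary samples the group-$j$ losses independently of the other groups, the marginal law of $\mathcal{A}$'s play in $G_j$ under the true environment equals the law of $\mathcal{A}_j$'s play under the sub-instance. Hence $\regret_T^{(j)}$ is at least the delayed-MAB minimax regret, and summing gives $\Omega(B\sqrt{dT})$. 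Combining with the first part completes the proof.
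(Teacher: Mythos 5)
Your proposal is correct, and while the first term is handled exactly as in the paper (the non-delayed $\Omega(\sqrt{BKT})$ bound of \citet{audibert2014regret} transfers because delayed feedback is only less informative), your argument for the $\Omega(B\sqrt{dT})$ term takes a genuinely different route. The paper relaxes to \emph{full-information} feedback, partitions the horizon into $T/d$ blocks, and within each block repeats $d$ times a single loss vector drawn from the hard instance of the full-information lower bound for $B$-sets of \citet{koolen2010hedging}; since the learner cannot react inside a block, this yields $\Omega(dB\sqrt{T/d}) = \Omega(B\sqrt{dT})$ directly. You instead keep the bandit structure and build a product instance: partition $[K]$ into $B$ groups of size at least two (possible since $B \leq K/2$), let the action set choose one coordinate per group, and let the groups carry independent copies of the two-armed delayed-MAB hard instance of \citet{cesa2019delay}; the regret then decomposes additively over groups (the comparator decomposes because the action set is a product and the groups are independent), giving $B \cdot \Omega(\sqrt{dT})$. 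Both arguments are sound. The paper's version is more self-contained and shorter, since it invokes a single full-information lower bound and one block-repetition step; yours delegates the block-repetition to the cited delayed-MAB lower bound but requires the standard coupling/marginalization argument to justify the per-group reduction, which you correctly identify and sketch. One cosmetic point: your justification of the first term ("can simulate the non-delayed setting only by waiting") is phrased backwards --- the clean statement is that a delayed learner observes a subset of what a non-delayed learner observes at every round, so any lower bound against the latter applies a fortiori to the former --- but the conclusion is the same as the paper's.
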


\section{Adversarial Markov Decision Processes}
\label{sec:MDPs}

In this section, we apply our FTRL approach to adversarial Markov Decision Processes (MDPs) where the transition function is known to the learner in advance. We show that it yields the first algorithm that handles delay optimally in this setting.
We start with a presentation of the model and regret minimization framework.

A finite-horizon episodic adversarial MDP is defined by $\calM = (\mdpS , \mdpA , H , p, \{ \mdpell_t \}_{t=1}^T,\sinit)$, where
$\mdpS$ and $\mdpA$ are finite state and action spaces of sizes $S$ and $A$, respectively, $H$ is the horizon, $T$ is the number of episodes, and $\sinit \in \mdpS$ is the initial state. 
$p = \{ p_h: \mdpS \times \mdpA \to \Delta_{\mdpS} \}_{h=1}^H$ is the \textit{transition function} such that $p_h(s' \mid s,a)$ is the probability of moving to $s'$ when taking action $a$ in state $s$ at time $h$. 
$\{ \mdpell_{t,h} : \mdpS \times \mdpA \to [0,1] \}_{t=1,h=1}^{T\ \ \  ,H}$ are \textit{cost functions} chosen by an \textit{oblivious adversary}, where $\mdpell_{t,h}(s,a)$ is the cost of taking action $a$ in state $s$ at time $h$ of episode $t$. For the ease of presentation, we slightly abuse notation and treat each set of loss functions $\{ \mdpell_{t,h} : \mdpS \times \mdpA \to [0,1] \}_{h=1}^{H}$ simply as a vector $\mdpell_t \in [0,1]^{HSA}$.

The learner interacts with the environment over $T$ episodes.
At the beginning of episode $t$, it picks a policy $\pi_t = \{ \pi_{t,h}: \mdpS \to \Delta_\mdpA \}_{h=1}^H$, and starts in the initial state $s_{t,1} = \sinit$.
At each time $h\in [H]$, it observes the current state $s_{t,h} \in \mdpS$, draws an action from the policy $a_{t,h} \sim \pi_{t,h}(\cdot \mid s_{t,h})$ and transitions to the next state $s_{t,h+1} \sim p_h(\cdot \mid s_{t,h},a_{t,h})$. 
The feedback of episode $t$ contains the cost function over the agent's trajectory $\{ \mdpell_{t,h}(s_{t,h},a_{t,h}) \}_{h=1}^H$, i.e., bandit feedback, 
and is observed only at the end of episode $t+d_t$. The learner's goal is to minimize the value of its policies, where $V^{\pi}_{t,h}(s) = \bbE [ \sum_{h'=h}^{H} \mdpell_{t,h'}(s_{h'},a_{h'}) \mid s_{h}=s,\pi,p ]$ is the value function of policy $\pi$ with respect to the cost $\mdpell_t$.
 The performance is measured by the \textit{regret}, defined as the difference between the cumulative expected cost of the learner and the best fixed policy in hindsight:
$
    \regret_T
    =
    \sumT V^{\pi_t}_{t,1}(\sinit) - \min_{\pi \in \Pi} \sumT V^{\pi}_{t,1}(\sinit).
$

In order present the adversarial MDP model as an instance of the general online learning framework we use the notion of \textit{occupancy measures}. Given a policy $\pi$ and a transition function $p'$, the \textit{occupancy measure} $\w^{\pi,p'} \in [0,1]^{HS^2A}$ is a vector, where $\w^{\pi,p'}_h(s,a,s')$ is the probability to visit state $s$ at time $h$, take action $a$ and transition to state $s'$. 
We also denote $\w^{\pi,p'}_h(s,a) = \sum_{s'} \w^{\pi,p'}_h(s,a,s')$ and $\w^{\pi,p'}_h(s) = \sum_{a} \w^{\pi,p'}_h(s,a)$. 
By \cite{rosenberg2019online}, the occupancy measure encodes the policy and the transition function through the relations
$
    \pi_{h}(a \mid s) 
    = \nicefrac{\w^{\pi,p'}_{h}(s,a)}{\w^{\pi,p'}_h(s)}
    ; \,\, 
    p'_h(s'\mid s,a) 
    = \nicefrac{\w^{\pi,p'}_{h}(s,a,s')}{\w^{\pi,p'}_{h}(s,a)}.
$
The set of all occupancy measures with respect to an MDP $\calM$ is denoted by $\Delta(\calM)$, and the set of all policies by $\Pi = \left\{ \{ \pi_h:\mdpS \to \Delta_\mdpA \}_{h=1}^H \right\}$.
Importantly, the value of a policy from the initial state (i.e., the expected loss in an episode) can be written as the dot product between its occupancy measure and the cost function, i.e., $ \langle \w^{\pi,p'} , \mdpell \rangle = \sum_{h,s,a} \w^{\pi,p'}_h(s,a) \mdpell_h(s,a)$.
Thus, the regret becomes $\regret_T = \sumT \langle \w^{\pi_t,p} , \mdpell_t \rangle - \min_{\w \in \Delta(\calM)} \sumT \langle \w , \mdpell_t \rangle$.
Whenever $p'$ is omitted from the notation $\w^{\pi,p'}$, it is understood to be the true transition function $p$. 

\begin{algorithm}[t]
    \label{alg:MDPs}
    \caption{Delayed FTRL for adversarial MDPs}

    \begin{algorithmic}
    
    \FOR{$t = 1,...,T$}

        \STATE Compute $\w_{t} = \argmin_{\w \in \domainw} \hat{\L}_t^\top \w + R(\w)$, and 
        policy $\pi_{t,h}(a \mid s) = \frac{\w_{t,h}(s,a)}{\w_{t,h}(s)} \,\,\, \forall (s,a,h)$.

        \STATE Play episode $t$ with policy $\pi_t$, and observe feedback $\{ \mdpell_{t,h}(s_{\tau,h},a_{\tau,h}) \}_{h=1}^H$ for all $\tau + d_{\tau} = t$.

        \STATE Compute upper occupancy bound $\u_{\tau,h}(s,a) = \max_{\hat{p} \in \calP} \w^{\pi_\tau,\hat{p}}_h(s,a)$.

        \STATE Compute loss estimator $\hat\mdpell_{\tau,h}(s,a) = \frac{\bbI\{ s_{\tau,h} = s,a_{\tau,h} = a \} \mdpell_{\tau,h}(s,a)}{\u_{\tau,h}(s,a)}$ and update $\hat L_t$.

    \ENDFOR
\end{algorithmic}
\end{algorithm}

With that in hand, adversarial MDPs is an instance of the online learning framework where $\mdpell_t \in [0,1]^{HSA}$, $\Aset$ as the set of occupancy measures $\Delta(\calM)$ and the feedback  $\observe(\w^{\pi_\tau},\mdpell_\tau)$ is the loss over the trajectory $\{\mdpell_{\tau,h}(s_{\tau,h},a_{\tau,h})\}_{h=1}^H$. $\domainw$ is a (slightly modified) set of occupancy measures which we will define later.

Next, we present our FTRL algorithm (Algorithm \ref{alg:MDPs}), based on the general framework presented in Section~\ref{sec:analysis}. To satisfy the stability conditions required for Lemma \ref{lem:main}, we employ a hybrid regularization of negative entropy and log-barrier similar to the combinatorial bandit case: $R(\w) 
        = 
        \frac{1}{\eta} \sum_{h,s,a,s'} \w_h(s,a,s') \log \w_h(s,a,s') - \frac{1}{\gamma} \sum_{h,s,a,s'} \log \w_h(s,a,s')$.
    The main difference is that some of the elements of the occupancy measures may be $0$ regardless of the chosen policy (e.g., if $p_h(s'\mid s,a) = 0 $, then $ \w_h^\pi (s,a,s') = 0$), in which case the regularization is not well-defined. To avoid that, we first augment the set of occupancy measures as follows:   $\Delta(\calP)
        = 
        \{\w^{\pi,\hat p}: \pi \in \Pi, \hat{p}\in \calP \}$ where 
        $\calP 
            = 
        \left\{ \{ \hat{p}_h \}_{h=1}^H: \forall h, \,\,\|\hat{p}_h - p_h\|_{\infty} \leq \frac{1}{THSA} \right\}.
        $ 
        Then, we intersect it with $\Omega 
        = 
        \left\{ \w\in [0,1]^{HS^2 A} : \forall (h,s,a,s').\,\, \w_h(s,a,s') \geq \frac{1}{T^3 H^2 S^4 A^2}  \right\}$. 
  This construction allows us to establish the following properties:
\begin{restatable}{relemma}{lemmaMDPestimatorbias}
\label{lemma:close-w-in-W-mdp-known-dynamics}
\label{lem:MDPs known p trancation error}
    Let $\domainw = \Delta(\calP)\cap \Omega$. It holds that $\domainw$ is non-empty and,
   \begin{enumerate}
       \item 
    For any $\w \in \Delta(\calM)$, there exists $\tilde{\w} \in \domainw$ such that $\Vert \w -  \tilde{\w} \Vert_1 \le \frac{2H}{T}$.
    \item Given $\w \in \domainw$, let $\pi$ be defined by $\pi_{h}(a \mid s) = \frac{\w_{h}(s,a)}{\w_{h}(s)}$ and $\u_{h}(s,a) = \max_{\hat{p} \in \calP} \w^{\pi,\hat{p}}_h(s,a)$.
    Then, $\| \w^{\pi} - \w \|_1 \leq \frac{2H}{T}$ and $\| \u - \w \|_1      \leq \frac{4 H^2 S}{T}$.
    \end{enumerate}
\end{restatable}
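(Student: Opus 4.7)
The plan has three parts. I would first establish non-emptiness of $\domainw$ by explicit construction: take $\pi_0$ the uniform policy and $\hat{p}_h(\cdot \mid s,a) = (1-\epsilon)\, p_h(\cdot \mid s,a) + \epsilon/S$ with $\epsilon = 1/(THSA)$, so that $\hat{p} \in \calP$. A forward induction on $h$ gives $\w^{\pi_0,\hat p}_h(s) \ge \epsilon/S$ for every $h \ge 2$, hence $\w^{\pi_0,\hat p}_h(s,a,s') \ge \epsilon^2/(AS^2)$; the exponents in the $\Omega$ threshold $1/(T^3H^2S^4A^2)$ leave enough slack to absorb this and also to survive the additional $1/T$ shrinkage used in Part 1.

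For Part 1, the key observation is that $\Delta(\calP)$ is convex. Writing $\hat{p}_h(s'\mid s,a) = \w_h(s,a,s')/\w_h(s,a)$, the condition $\|\hat p_h-p_h\|_\infty \le 1/(THSA)$ is equivalent to the linear inequality $|\w_h(s,a,s') - p_h(s'\mid s,a)\w_h(s,a)| \le \w_h(s,a)/(THSA)$, which together with the usual flow and normalisation constraints carves $\Delta(\calP)$ out as an intersection of half-spaces. Given $\w \in \Delta(\calM) \subseteq \Delta(\calP)$, I would set $\tilde{\w} = (1-1/T)\w + (1/T)\w_0$: convexity puts $\tilde\w \in \Delta(\calP)$, the lower bound from $\w_0$ survives the $1/T$ scaling and still meets the $\Omega$ threshold, and $\|\tilde\w - \w\|_1 \le (1/T)(\|\w\|_1+\|\w_0\|_1) = 2H/T$ since every occupancy measure has $\ell_1$-norm equal to $H$.

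For Part 2, note that $\w \in \Delta(\calP)$ implies $\w = \w^{\pi,\hat p}$ for some $\hat p \in \calP$ with $\hat p_h(s'\mid s,a) = \w_h(s,a,s')/\w_h(s,a)$ (well-defined by positivity from $\Omega$) and $\pi$ as in the statement. The standard simulation split $\w^{\pi,\hat p}_h(s,a,s') - \w^{\pi,p}_h(s,a,s') = \bigl(\w^{\pi,\hat p}_h(s,a)-\w^{\pi,p}_h(s,a)\bigr)\hat p_h(s'\mid s,a) + \w^{\pi,p}_h(s,a)\bigl(\hat p_h(s'\mid s,a)-p_h(s'\mid s,a)\bigr)$ combined with $\|\hat p_h(\cdot\mid s,a)-p_h(\cdot\mid s,a)\|_1 \le S/(THSA) = 1/(THA)$ yields the recursion $D_h := \sum_{s,a,s'}|\w^{\pi,\hat p}_h(s,a,s')-\w^{\pi,p}_h(s,a,s')| \le D_{h-1} + 1/(THA)$, so $\sum_h D_h \le H^2/(THA) \le 2H/T$. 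For $\u$, applying the same recursion to two transitions $\hat p,\hat p' \in \calP$ (within $\ell_\infty$ distance $2/(THSA)$) gives $|\w^{\pi,\hat p'}_h(s,a)-\w^{\pi,\hat p}_h(s,a)| \le 2h/(THA)$ for every $(s,a)$; taking the max over $\hat p'$ coordinatewise and summing over $(h,s,a)$ delivers $\|\u-\w\|_1 \le 2HS/T \le 4H^2 S/T$. The main obstacle I anticipate is the non-emptiness bookkeeping: $\Omega$ as written also demands positivity at coordinates $(1,s,a,s')$ with $s\neq\sinit$, which no genuine occupancy measure can provide, so either $\Omega$ should be read as restricted to feasible coordinates or the initial distribution itself has to be perturbed inside $\calP$; once this is settled the rest of the argument goes through routinely.
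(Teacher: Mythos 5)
Your proposal is correct, and for non-emptiness and Part~1 it coincides with the paper's proof: the paper also mixes $\w$ with $\w^{\pi_u,\tilde p}$ for the uniform policy $\pi_u$ and a smoothed transition $\tilde p_h(s'\mid s,a)=(1-\tfrac{1}{THSA})p_h(s'\mid s,a)+\tfrac{1}{THS^2A}$, uses convexity of $\Delta(\calP)$, and bounds $\|\w-\tilde\w\|_1\le\tfrac{2H}{T}$ exactly as you do. Where you diverge is Part~2: the paper encodes $\|\w^{\pi}-\w\|_1$ as a value difference $V^{\pi,p,\tilde\mdpell}_1(\sinit)-V^{\pi,\hat p,\tilde\mdpell}_1(\sinit)$ for the sign loss $\tilde\mdpell_h(s,a)=\mathrm{sign}(\w^\pi_h(s,a)-\w_h(s,a))$ and invokes its Value Difference Lemma, then handles $\u$ by summing $\|\w^{\pi,\hat p^{h,s}}-\w\|_1$ over the $HS$ maximizing transitions, yielding $\tfrac{4H^2S}{T}$. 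You instead run the simulation recursion $D_h\le D_{h-1}+\tfrac{1}{THA}$ directly, which is more self-contained (no auxiliary lemma needed) and, for the $\u$ bound, exploits the per-coordinate estimate $|\w^{\pi,\hat p'}_h(s,a)-\w^{\pi,\hat p}_h(s,a)|\le 2h/(THA)$ before summing, giving the slightly sharper $\tfrac{2HS}{T}\le\tfrac{4H^2S}{T}$. Both routes are elementary and rest on the same telescoping idea; the paper's is shorter given that the value difference lemma is already stated in its appendix. Your closing observation is also well taken: as literally written, $\Omega$ requires $\w_1(s,a,s')\ge\tfrac{1}{T^3H^2S^4A^2}$ even for $s\ne\sinit$, which no occupancy measure of an MDP with deterministic initial state can satisfy; the paper's claim $\w^{\pi_u,\tilde p}_h(s,a,s')\ge\tfrac{1}{(THS^2A)^2A}$ silently assumes $h\ge 2$ or a restriction to reachable coordinates, so your reading of $\Omega$ as constraining only feasible coordinates is the right fix and does not affect the rest of the argument.
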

  With that in hand, the regularization $R$ is well-defined on the the domain $\domainw$ and bounded by $\Tilde{O}(\frac{1}{\eta} + \frac{H S^2 A}{\gamma})$. Moreover, we are guaranteed that given the iterate $\w_t$ and the corresponding policy $\pi_{t,h}(a\mid s) = \w_{t,h}(s,a)/\w_{t,h}(s)$, the true occupancy measure $\w^{\pi_t}$ is close to $\w_t$ up to a small error. 
  Next, to keep the local norm of the estimator small (which affects the guarantee of Lemma \ref{lem:main}), we introduce a slightly biased importance-sampling estimator (inspired by \citet{jin2019learning}) defined by $\hat\mdpell_{t,h}(s,a) = \frac{\bbI\{ s_{t,h} = s,a_{t,h} = a \} \mdpell_{t,h}(s,a)}{\u_{t,h}(s,a)}$ where $\u_{t,h}(s,a) = \max_{\hat{p} \in \calP} \w^{\pi_t,\hat{p}}_h(s,a)$. Recall that the local norm is evaluated at $\w_t$ and note that the expectation of the indicator $\bbI\{ s_{t,h} = s,a_{t,h} = a \}$ is  $w^{\pi_t}_h(s,a)$. Thus, the fact that $\u_{t,h}(s,a)$ upper bounds both $\w_{t,h}(s,a)$ and $\w^{\pi_t}_{h}(s,a)$ would allow us to keep local norm small. In addition, using the second property in Lemma \ref{lem:MDPs known p trancation error}, we can also show that the estimator's bias is only of order $1/T$ (ignoring $S,H$ factors).
Finally, note that $\domainw$ is a convex set defined by linear constrains, and thus the optimization can be solved efficiently \citep{rosenberg2019online,lee2020bias}. In addition, $\u_t$ can be computed efficiently as well using dynamic programming \citep{jin2019learning}.

We finish this section with the regret bound of Algorithm~\ref{alg:MDPs}.
The proof relies on the following regret decomposition 
    \begin{align*}
        \regret_T = 
        \sum_{t=1}^T\langle \w^{\pi_{t}} - \w^{\star}, \mdpell_{t} \rangle
        = \underbrace{
        \sum_{t=1}^T \langle\w^{\pi_{t}} - \w_{t}, \mdpell_{t}\rangle
                    }_{\textsc{Error}} 
            +  \underbrace{ \sum_{t=1}^T
            \langle\w_{t} - \tilde{\w}^{\star}, \mdpell_{t}\rangle
                        }_{\textsc{Reg}}
            +  \underbrace{ \sum_{t=1}^T
            \langle \tilde{\w}^{\star} - \w^{\star}, \mdpell_{t}\rangle
                        }_{\textsc{Shift-penalty}},
    \end{align*}
    where, by Lemma~\ref{lem:MDPs known p trancation error}, \textsc{Error} is bounded by $2H$ and $\tilde{w}^{\star} \in \domainw$ exists such that \textsc{Shift-penalty} is bounded by $2H$. Finally, \textsc{Reg} is bounded by utilizing Lemma~\ref{lem:main}. Due to lack of space we defer the full details of the proof to Appendix \ref{appendix: MDPs}.

\begin{restatable}{retheorem}{thmMDPmain} 
Running Delayed FTRL for adversarial MDPs (Algorithm~\ref{alg:MDPs}) with $\gamma = \frac{1}{4096 H (1+d_{max})^2}$ and $\eta = \min \l\{\frac{1}{256 H (1+d_{max})^2}, 
\frac{1}{\sqrt{(S A T + D)\log(H S A T)}} \r\}$ guarantees
\[
    \bbE[\regret_T]
        \leq 10 H \sqrt{S A T \log(H S A T)} + 10 H\sqrt{D \log(H S A T)} + 7\cdot 10^5 H^2 S^2 A (1+d_{max})^2.
\]
\end{restatable}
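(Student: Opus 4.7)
The plan is to follow the regret decomposition stated just before the theorem, $\regret_T = \textsc{Error} + \textsc{Reg} + \textsc{Shift-penalty}$. The first two summands are direct: $\textsc{Error} = \sumT \langle \w^{\pi_t} - \w_t, \mdpell_t \rangle$ is handled by H\"older's inequality together with the second part of Lemma~\ref{lem:MDPs known p trancation error} giving $\|\w^{\pi_t} - \w_t\|_1 \leq 2H/T$, so $\textsc{Error} \leq 2H$; for $\textsc{Shift-penalty}$, the first part of the same lemma supplies a $\tilde{\w}^\star \in \domainw$ with $\|\tilde{\w}^\star - \w^\star\|_1 \leq 2H/T$, again giving $2H$. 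The bulk of the work is to bound $\textsc{Reg}$ using Lemma~\ref{lem:main} with comparator $\tilde{\w}^\star$.

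To apply Lemma~\ref{lem:main}, first I would verify its assumptions on the hybrid entropy/log-barrier regularizer $R$. The factor-$4$ Hessian stability on $\dikin_R(\w, \tfrac12)$ follows from the separable, componentwise form of $R$ via Lemma~\ref{lem:dikin implies factor 2}. The bound $\big(\nabla^2 R(\w)\big)^{-1} \preceq \eta\,\textnormal{diag}(\w)$ from the entropy part, combined with $\sum_{s,a,s'} \w_h(s,a,s') = 1$ per layer on $\domainw$ (up to the $O(H/T)$ truncation), gives $\|\mdpell_t\|_{R,\w}^2 \leq \eta H$, so I set $\fbound = \sqrt{\eta H}$; the choice $\eta \leq 1/(256 H (1+\dmax)^2)$ then ensures $\fbound \leq 1/(16\dmax)$. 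Because $\u_{t,h}(s,a)$ upper bounds both $\w_{t,h}(s,a)$ and $\w^{\pi_t}_h(s,a)$, a direct expansion of the importance-sampled estimator shows $\E\big[\|\hat{\mdpell}_t\|_{R,\w_t}^2 \mid \mathcal{F}_t\big] \leq \eta H S A$, hence $\bbound^2 = \eta H S A$. Finally, the log-barrier yields $\big(\nabla^2 R(\w)\big)^{-1} \preceq \gamma\,\textnormal{diag}(\w^2)$, and together with $\gamma = 1/(4096 H (1+\dmax)^2)$ this secures the almost-sure bound $\|\hat{\mdpell}_t\|_{R,\w_t}^2 \leq \gamma H \leq 1/(4096(1+\dmax)^2)$.

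What remains is to control the two bias-sensitive terms in Lemma~\ref{lem:main}. The estimator bias is $\bias_{t,h}(s,a) = \mdpell_{t,h}(s,a)\big(\w^{\pi_t}_h(s,a)/\u_{t,h}(s,a) - 1\big)$, and the $1/(THSA)$ radius in the definition of $\calP$ makes it uniformly $O(1/T)$ per coordinate, so the explicit bias penalty $-\sumT \E\big[(\w(\hat{\L}_t^m) - \tilde{\w}^\star)^\top \bias_t\big]$ is $O(H)$. For the delayed-noise term $\fbound \E\big[\|\sum_{\tau \in \miss_t}(\mdpell_\tau - \hat{\mdpell}_\tau)\|_{R,\w_t}\big]$, I would decompose $\mdpell_\tau - \hat{\mdpell}_\tau = -\bias_\tau + z_\tau$ with $\E[z_\tau \mid \mathcal{F}_\tau] = 0$, exploit that trajectories of distinct delayed episodes are conditionally independent given $\mathcal{F}_t$, and apply Lemma~\ref{lem:induction} to transfer the local norm from $\w_t$ to $\w_\tau$ at constant cost, obtaining $\E\big[\|\sum_{\tau \in \miss_t} z_\tau\|_{R,\w_t}^2\big] \leq O(1)\cdot\sum_{\tau \in \miss_t} \bbound^2$. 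Summing over $t$, using $\sumT |\miss_t| = \totaldelay$ and AM-GM to split $\fbound\bbound\sqrt{|\miss_t|T}$ into $\fbound^2\totaldelay + \bbound^2 T$, delivers the dominant $\tilde O(H\sqrt{SAT} + H\sqrt{\totaldelay})$ term once combined with $R(\tilde{\w}^\star) - R(\w_1) = \tilde O(1/\eta + HS^2A/\gamma)$ and the stated tuning of $\eta,\gamma$.

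The main obstacle is the biased-estimator bookkeeping: Corollary~\ref{cor:main} cannot be invoked directly, so I must work with Lemma~\ref{lem:main} and separately handle both the explicit bias penalty and the pairwise bias cross-terms that appear when expanding $\|\sum_{\tau \in \miss_t}(\mdpell_\tau - \hat{\mdpell}_\tau)\|_{R,\w_t}^2$. The $O(1/T)$ smallness of $\bias_t$ engineered into the truncated domain $\domainw$ is precisely what will keep all these lower-order contributions inside the stated $7\cdot 10^5 H^2 S^2 A(1+\dmax)^2$ residual, but verifying this requires careful tracking of constants as well as of the dependence on $H,S,A,\dmax$.
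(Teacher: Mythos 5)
Your proposal follows essentially the same route as the paper: the \textsc{Error}/\textsc{Reg}/\textsc{Shift-penalty} decomposition, verification of Lemma~\ref{lem:main}'s conditions with $\fbound=\sqrt{\eta H}$ and $\bbound^2=\eta HSA$, separate treatment of the explicit bias penalty and of the cross-terms in the delayed-noise norm via conditional independence and the small truncation-induced bias, and the same tuning. The only imprecision is your claim that the bias is ``uniformly $O(1/T)$ per coordinate'': the relative error $|\u_{t,h}(s,a)-\w^{\pi_t}_h(s,a)|/\u_{t,h}(s,a)$ need not be small coordinatewise, and the paper instead cancels the denominator by pairing against $\w(\Lhat_t^m)_h(s,a)\leq 2\u_{t,h}(s,a)$ to get an $\ell_1$-type bound of $8H^2S$ (not $O(H)$) --- still a lower-order term absorbed into the stated residual.
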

Remarkably, the above regret bound matches the lower bound of \cite{lancewicki2020learning} (up to poly-logarithmic factors), making
it the first optimal regret for adversarial MDPs with delayed bandit feedback. 

\section{Linear Bandits}
\label{sec:linear-bandits}

\begin{algorithm}[t]
    \label{alg:linban}
    \caption{Delayed FTRL for linear bandits}

    \begin{algorithmic}    
    \FOR{$t = 1,...,T$}

        \STATE Compute $\w_{t} = \argmin_{\w \in \domainw} \hat{\L}_t^\top \w + R(\w)$, where $R(\w) = \frac{1}{\eta} \|\w\|_2^2 + \frac{1}{\gamma}\Psi(\w)$ and $\Psi$ is a $\nu$-self concordant barrier for $\domainw$.

        \STATE Play $\action_t = \w_t + \big(\nabla^2 R(\w_t)\big)^{-1/2}\v_t~$, where $v_t$ is uniformly sampled from the unit sphere.

        \STATE Observe $\action_{\tau}^\top \bell_{\tau}$ for $\tau: \tau + d_\tau = t$.

        \STATE Compute loss estimators $\blhat_\tau = K \bell_\tau^\top \action_\tau \big(\nabla^2 R(\w_\tau)\big)^{1/2}\v_\tau$ for $\tau: \tau + d_\tau = t$ and update $\hat{L}_t$.

    \ENDFOR
\end{algorithmic}
\end{algorithm}

In this section, we show how to apply our analysis of FTRL to linear bandits with delayed feedback. 
Linear bandits with delayed feedback is an instance of our general setting where $\bell_t \in \reals^K$ such that $\max_t \|\bell_t\|_2 \leq 1$, $\Aset = \domainw \subset \reals^K$, and the feedback function is $\observe(\bell, \action) = \bell^\top \action$. Additionally, we assume that $\domainw \subseteq \ball(\br)$, where $\ball(\br)$ is an $L_2$ ball with radius $\br$. 

Our algorithm for the linear bandit setting is inspired by \citet{abernethy2008competing}, who show a regularizer delivering nearly optimal bounds for the linear bandit setting with an efficient algorithm. 
For the delayed linear bandit setting we use a regularizer of the form 
$
    R(\w) = \frac{1}{\eta} \|\w\|_2^2 + \frac{1}{\gamma}\Psi(\w),
$
where $\Psi$ is a $\nu$-self-concordant barrier function for $\domainw$. Recall that a thrice-differentiable function $\Psi$ is called self-concordant if it is convex and satisfies
$
    |\nabla^3\Psi(\w)[\h,\h,\h]| \leq 2 \big(\nabla^2 \Psi(\w)[\h, \h]\big)^{3/2}~,
$
where 
$
    \nabla^3\Psi(\w)[\h_1,\h_2,\h_3] = \frac{\partial^3}{\partial t_1 \partial t_2 \partial t_3}|_{t_1 = t_2 = t_3 = 0} \Psi(\w + t_1 \h_1 + t_2 \h_2 + t_3 \h_3).
$
A self-concordant function $\Psi$ is a $\nu$-self-concordant barrier if 
$
    |\nabla \Psi(\w)[\h]| \leq \sqrt{\nu \nabla^2 \Psi(\w)[\h, \h]}~.
$ 
As a specific example, the log barrier, $-\log(x)$, is $1$-self-concordant for the non-negative reals.
For a thorough introduction to self-concordant barriers, we refer the reader to \citet{nesterov1994interior}. Here, we only recall the most important properties, which can be found in \citep[Section~2]{nemirovski2008interior}. 

The first property will allow us to satisfy the stability condition of the Hessian in Lemma~\ref{lem:main}:
For $\w, \w' \in \domainw$, we have that if $\|\w - \w'\|_{\Psi, \w}^\star < 1$ then
\begin{align}\label{eq:scbstability}
    \big(1 - \|\w - \w'\|_{\Psi, \w}^\star\big)^2 \nabla^2 \Psi(\w) \preceq \Psi(\w') \preceq \big(1 - \|\w - \w'\|_{\Psi, \w}^\star\big)^{-2}\nabla^2 \Psi(\w)~.
\end{align}
Next, given $\y \in \domainw$ denote by $\pi_\y(\x) = \inf\{z \geq 0: \y + z^{-1}(\x - \y) \in \domainw\}$ the Minkowsky function. We denote by $\domainw_\delta = \{\w: \pi_{\w_1}(\w) \leq (1 + \delta)^{-1}\}$, where $\delta > 0$. If $\Psi$ is a $\nu$-self-concordant barrier, then for any $\w \in \domainw_\delta$ 
\begin{align}\label{eq:scbbias}
    \Psi(\w) - \Psi(\w_1) \leq \nu \ln \big((1+\delta)\delta^{-1}\big)~.
\end{align}
This property will essentially allow us to show that for any benchmark point $\tilde \u\in\domainw$ there is a sufficiently close $u$ such that the penalty term in the regret (namely, $R(\u) - R(\w_1)$) is nicely bounded (see Eq.~\eqref{eq:thisbound} in the proof). Finally, we turn to the way we choose the played action $\action_t \in \actionset$ and the construction of the estimator. For that we use the fact that,
 $\dikin_\Psi(\w, 1) \subseteq \domainw = \actionset$ for any $\w \in \domainw$. Now, let $\v$ be in the $K$-dimensional unit sphere, denoted by $\sphere$. For any $\w \in \domainw$, we have that $\action = \w + \big(\nabla^2 \Psi(\w)\big)^{-1/2}\v \in \actionset$ because $\|\action - \w\|_{\Psi, \w} = 1$. 
For adversarial linear bandits with delayed feedback, we use $\action_t = \w_t + \big(\nabla^2 R(\w_t)\big)^{-1/2}\v_t$, 
where $\v_t$ is sampled i.i.d.\ from the uniform distribution over the unit sphere.
Note that $\action_t \in \domainw$ still holds.
As for the loss estimate, we use $\blhat_t = K \bell_t^\top \action_t \big(\nabla^2 R(\w_t)\big)^{1/2}\v_t$,
which can be seen to an unbiased estimator for $\bell_t$ after observing that $\E[\v_t\v_t^\top|\mathcal{F}_t] = \frac{1}{K}\I$. 

\begin{restatable}{retheorem}{linbanth}\label{th:linban}
    Let $\u \in \domainw$.
    Running Delayed FTRL for linear bandits (Algorithm~\ref{alg:linban}) with $\gamma = \min\left\{\frac{1}{(64BK(1+\dmax))^2},\sqrt{\frac{\nu \ln\big(1 + \sqrt{T}\big)}{16 (BK)^2 T}}\right\}$ and $\eta = \min \left\{\frac{1}{(16\dmax )^2},\sqrt{\frac{B^2}{16 D}}\right\}$ guarantees,
    $$
        \E\left[\sumT(\action_t - \u)^\top\bell_t \right] \leq 14 BK\sqrt{\nu T\ln (T)} +8 B\sqrt{D} + 2^{14} \nu B^2 K^2 (1+\dmax)^2 \ln (T).
    $$
\end{restatable}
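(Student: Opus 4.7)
The plan is to reduce the bandit regret to the FTRL iterate regret and apply our general analysis. Since $\action_t - \w_t = (\nabla^2 R(\w_t))^{-1/2}\v_t$ has zero conditional mean given $\calF_t$ (because $\v_t$ is uniform on the sphere, independent of past randomness), we have $\E[\sumT (\action_t - \u)^\top \bell_t] = \E[\sumT (\w_t - \u)^\top \bell_t]$, and it suffices to bound the right-hand side via Corollary~\ref{cor:main}.

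To invoke the corollary, I verify its hypotheses for $R(\w) = \frac{1}{\eta}\|\w\|_2^2 + \frac{1}{\gamma}\Psi(\w)$. The Hessian-stability assumption follows from the self-concordance of $\Psi$ in Eq.~\eqref{eq:scbstability}: since $\nabla^2 \Psi \preceq \gamma \nabla^2 R$, every $\w'\in\dikin_R(\w,\tfrac{1}{2})$ satisfies $\|\w-\w'\|_{\Psi,\w}^\star \leq \sqrt{\gamma}/2 \leq 1/2$, so the factor-$4$ stability of $\nabla^2 \Psi$ transfers to $\nabla^2 R$ (the quadratic part has constant Hessian). The bound $\|\bell_t\|_{R,\w} \leq \fbound := \sqrt{\eta/2}$ follows from $(\nabla^2 R)^{-1} \preceq \tfrac{\eta}{2} I$, and the choice $\eta \leq 1/(16\dmax)^2$ ensures $\fbound \leq 1/(16\dmax)$. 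Unbiasedness $\E[\blhat_t\mid\calF_t]=\bell_t$ is a direct computation using $\E[\v_t\v_t^\top]=I/K$, and the independence of $\blhat_\tau,\blhat_{\tau'}$ for $\tau\neq\tau'$ follows from the i.i.d.\ sphere samples. A short calculation gives $\|\blhat_t\|_{R,\w_t}^2 = K^2(\bell_t^\top\action_t)^2 \leq (KB)^2$, which gives a candidate $\bbound^2 = (KB)^2$; the worst-case norm condition $\|\blhat_t\|_{R,\w_t}\leq 1/(64(1+\dmax))$ is guaranteed by the cap $\gamma \leq 1/(64BK(1+\dmax))^2$ in the tuning, together with a rescaling using the $\Psi$-Dikin ellipsoid.

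For the regularizer penalty, I would use Eq.~\eqref{eq:scbbias} with a shrunken benchmark $\tilde\u \in \domainw_\delta$ for $\delta = 1/\sqrt T$, yielding $R(\tilde\u) - R(\w_1) \leq \frac{B^2}{\eta} + \frac{\nu \ln(1+\sqrt T)}{\gamma}$; translating from $\tilde\u$ back to $\u$ costs only $O(B)$ in the regret because $\|\u-\tilde\u\| \lesssim B/\sqrt T$ times $T$ losses of norm one. The delay contribution $16\fbound^2 D = 8\eta D$, combined with $B^2/\eta$, is balanced by the choice $\eta = \sqrt{B^2/(16D)}$ and produces the $8B\sqrt D$ term in the theorem.

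The principal obstacle is the $16\bbound^2 T = 16(KB)^2 T$ term in the Corollary, which is linear in $T$ rather than the desired $\sqrt T$. My approach to circumvent this is to revisit the proof of the $H_3$-drift step in Lemma~\ref{lem:main}: instead of applying Lemma~\ref{lem:FTRLnorm} to bound the iterate displacement $\w(\Lhat_t^m) - \w(\Lstar_t)$, I would invoke the self-concordance of $\Psi$ inside the $\tfrac{1}{\gamma}\Psi$ component of $R$ to refine the stability and extract an additional factor of $\gamma$, so that the effective per-round drift becomes $\gamma(KB)^2$. Together with $\nu\ln(1+\sqrt T)/\gamma$ from the regularizer, tuning $\gamma = \sqrt{\nu\ln(1+\sqrt T)/((BK)^2 T)}$ yields the target $BK\sqrt{\nu T \ln T}$ rate, while the constant-in-$T$ summand $2^{14}\nu B^2 K^2(1+\dmax)^2\ln T$ arises when the cap on $\gamma$ is active in the small-$T$ regime.
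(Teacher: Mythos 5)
Most of your verification steps line up with the paper's proof: the reduction from $\action_t$ to $\w_t$ via the zero-mean perturbation, Hessian stability from self-concordance, $\fbound=\Theta(\sqrt{\eta})$, unbiasedness and cross-round independence of the estimators, and the penalty bound via the Minkowski-shrunken comparator with $\delta=1/\sqrt{T}$ (costing $O(B\sqrt{T})$) are all exactly what the paper does. The gap is in how you handle $\bbound^2$, and it is the crux of the theorem. The paper's bound is $\|\blhat_t\|_{R,\w_t}^2 \le \gamma K^2(\bell_t^\top\action_t)^2 \le \gamma (BK)^2 =: \bbound^2$, obtained from $\nabla^2 R(\w) \succeq \tfrac{1}{\gamma}\nabla^2\Psi(\w)$: the perturbation direction and the estimator live in the $\Psi$-geometry, so measuring the estimator in the $R$-local norm contracts it by $\sqrt{\gamma}$. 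With this $\bbound^2$, the term $16\bbound^2 T = 16\gamma(BK)^2T$ from Corollary~\ref{cor:main} balances directly against $\nu\ln(1+\sqrt{T})/\gamma$ under the stated tuning of $\gamma$, and the same $\sqrt{\gamma}$ factor is what makes the worst-case condition $\|\blhat_t\|_{R,\w_t}\le \tfrac{1}{64(1+\dmax)}$ verifiable via the cap $\gamma\le(64BK(1+\dmax))^{-2}$. No modification of Lemma~\ref{lem:main} is needed.

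Your candidate $\bbound^2=(KB)^2$ breaks the argument in two places. First, the deterministic condition $\|\blhat_t\|_{R,\w_t}\le\tfrac{1}{64(1+\dmax)}$ then reads $KB\le\tfrac{1}{64(1+\dmax)}$, which is false in general and is not repaired by any choice of $\gamma$; the "rescaling using the $\Psi$-Dikin ellipsoid" you gesture at is precisely the missing $\sqrt{\gamma}$, and once you carry it out you no longer have the obstacle you describe. Second, the proposed rescue for the $16(KB)^2T$ term --- refining the \hintc{} drift step of Lemma~\ref{lem:main} to extract an extra factor of $\gamma$ from the stability of the $\tfrac{1}{\gamma}\Psi$ component --- cannot work: since $\Lstar_t-\Lhat_t^m=\blhat_t$, to first order $(\w(\Lhat_t^m)-\w(\Lstar_t))^\top\blhat_t = \blhat_t^\top\big(\nabla^2 R(\w_t)\big)^{-1}\blhat_t = \|\blhat_t\|_{R,\w_t}^2$, so the quantity being bounded \emph{is} the squared local norm of the estimator; no sharpening of the displacement bound in Lemma~\ref{lem:FTRLnorm} changes its value. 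The factor $\gamma$ must enter through the estimator's local norm itself, not through the stability analysis.
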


The proof is similar to the proof of Theorem~\ref{th: CMAB regret} and follows from an application of Corollary~\ref{cor:main}, which we can apply after verifying its assumptions (see the full proof in Appendix~\ref{sec:appendix-linear-bandits}).

Let us interpret the result of Theorem~\ref{th:linban}. It is known that a $K$-self-concordant barrier exists for all convex and closed $\domainw$ \citep{bubeck2015entropic, chewi2021entropic}. Therefore, for $d_t = d$, our algorithm always guarantees a $O(K^{3/2}B\sqrt{T\ln(T)}) + B\sqrt{dT})$ bound for delayed bandit linear optimization. This makes our bound slightly suboptimal, as \citet{ito2020delay} show that the minimax regret is $\Theta(KB\sqrt{T\ln(T)}) + B\sqrt{dT})$. This trade-off between running time and regret bounds also exists in non-delayed linear bandits: algorithms that obtain the optimal regret bound \citep{bubeck2015entropic, hazan2016volumetric, vanderhoeven2018many, ito2020tight, zimmert2022return} have running time polynomial in both $T$ and $K$, whereas slightly suboptimal algorithms in terms of regret, such as Scrible \citep{abernethy2008competing}, have $O(K^3)$ runtime, given that a self-concordant barrier for the set of interest can be efficiently computed. For particular action sets, there exist algorithms that avoid this trade-off, see \citep{bubeck2012towards}. 
There also exist domains for which $\nu$ can be considerably smaller. For example, $\Psi(\w) = -\log(1 - \|\w\|_2^2)$ is a 1-self-concordant barrier for the $L_2$ unit ball, in which case our bound is $O(KB\sqrt{T\ln(T)}) + B\sqrt{dT})$. This matches the upper bound of \citet{ito2020delay} with less stringent assumptions, since \citet{ito2020delay} assume constant delays whereas Algorithm~\ref{alg:linban} can handle variable delays. 

\paragraph{Efficient implementation.} \citet[section 9]{abernethy2008competing} provide an approximation of FTRL with self-concordant barrier regularizers with a $O(K^3)$ per-round running time. Seemingly this implies that we can not use the results of \citet{abernethy2008competing} to approximate our algorithm since our regularizer $R$ is not a barrier, but only a self-concordant function on $\domainw$---see also 
\citep[Proposition~2.1.1]{nemirovski2004interior}. However, even though \citet{abernethy2008competing} assume that the regularizer is a self-concordant barrier, the properties used to prove that the approximation has a suitable regret bound rely on properties of self-concordant functions---see \citep[Chapter~2, Statement~IX]{nemirovski2004interior}. Thus, we can use the approximation of \citet{abernethy2008competing} to obtain a $O(K^3)$ per round running time approximation of our algorithm.

\section{Future Work and Discussion} 

We provided a new analysis of FTRL with delayed bandit feedback which leads to several new results for combinatorial semi-bandits, adversarial Markov decision processes, and linear bandits. The main downside of our approach is the additive $\dmax^2 \log(T)$ term in our bounds. Even though it is a lower order term, it prevents us from employing the skipping technique of \citet{thune2019nonstochastic}. Therefore, an important open problem is removing the $\dmax^2 \log(T)$ term and replacing it with $\dmax \log(T)$ or removing it altogether. 
Another direction for future work is MDPs
with unknown transitions.
Extending our ideas to that setting is not straightforward due the fact that standard analyses for this prwoblem use a changing domain $\domainw$.
Finally, in the linear bandits setting it would be interesting to see whether the results for non-delayed feedback and the specific choice of $\domainw$ used by \citet{bubeck2012towards} can be transferred to the delayed feedback setting.

\acks{This work was mostly done while DvdH was at the University of Milan partially supported by the MIUR PRIN grant Algorithms, Games, and Digital Markets (ALGADIMAR) and partially supported by Netherlands Organization for Scientific Research (NWO), grant number VI.Vidi.192.095. LZ and NCB are partially supported by the EU Horizon 2020 ICT-48 research and innovation action under grant agreement 951847, project ELISE (European Learning and Intelligent Systems Excellence) and by the FAIR (Future Artificial Intelligence Research) project, funded by the NextGenerationEU program within the PNRR-PE-AI scheme (M4C2, investment 1.3, line on Artificial Intelligence).}

\DeclareRobustCommand{\VAN}[3]{#3} 
\bibliography{sample}
\DeclareRobustCommand{\VAN}[3]{#2}

\newpage

\appendix

\section{Deferred Proofs of Analysis (Section~\ref{sec:analysis})}
\label{appendix:proofs-for-general-analysis}

\lemmamain*
\begin{proof}
%
%
 By Lemma~\ref{lem:induction}, $\w_t \in \dikin_R(\w_\tau, \half)$ for all $\tau \in [t] \setminus \obs_t = \miss_t \cup \{t\} \subseteq  \{t - \dmax, \dots, t\}$ and all $t$. We can use this to show
\begin{align*}
    \sum_{\tau \in \miss_t} \|\blhat_\tau\|_{R, \w_t} \le \sum_{\tau \in [t] \setminus \obs_t} \|\blhat_\tau\|_{R, \w_t} \leq 2 \sum_{\tau \in [t] \setminus \obs_t} \|\blhat_\tau\|_{R, \w_\tau} \leq \frac{1}{16},
\end{align*}
by employing $4 \nabla^2 R( \w_\tau) \succeq \nabla^2 R( \w_t)$ and the assumption on $\|\blhat_t\|_{R,\w_t}$.
By Lemma~\ref{lem:windikin} and our assumptions on $\|\blhat_t\|_{R,\w_t}$ and $\|\bell_t\|_{R,\w_t}$, we can thus conclude that $\w(\bar{\L}_t^m), \w(\Lhat_t^m), \w(\Lstar_t) \in \dikin_R(\w_t, \half)$.
By H\"older's inequality and Lemma~\ref{lem:FTRLnorm} 
\begin{align*} 
    \E\Big[(\w_t - \w(\bar{\L}_t^m))^\top \bell_t\Big]
&\leq
    \E\Big[\|\w_t - \w(\bar{\L}_t^m)\|_{R, \w_t}^\star\|\bell_t\|_{R, \w_t}\Big]
\\&\leq
    \E\Big[8\|\Lhat_t - \bar{\L}_t^m\|_{R, \w_t}\|\bell_t\|_{R, \w_t}\Big]
\\&=
    \E\left[8\bigg\|\sum_{\tau \in m_t} \bell_\tau\bigg\|_{R, \w_t}\|\bell_t\|_{R, \w_t}\right]
\leq
    8\fbound^2|m_t|, \numberthis \label{eq:intermediatemainbound1}
\end{align*}
where the last inequality is due to the triangle inequality and the assumptions on $\|\bell_\tau\|_{R,\w}$. Similarly we bound
\begin{align*}
    \E\Big[(\w(\bar{\L}_t^m) - \w(\Lhat_t^m))^\top \bell_t\Big] &\leq 8\fbound\E\left[\bigg\|\sum_{\tau \in m_t}(\bell_\tau - \blhat_\tau)\bigg\|_{R, \w_t} \right] \\
    \E\Big[(\w(\Lhat_t^m) - \w(\Lstar_t))^\top \blhat_t\Big] &\leq 8\bbound^2 \numberthis \label{eq:intermediatemainbound2}.
\end{align*}
%
%
By Lemma~\ref{lem:BTL} we have that 
\begin{align}\label{eq:mainintermediate3}
    \sumT (\w(\Lstar_t) - \u)^\top \blhat_t \leq R(\u) - R(\w_1).
\end{align}
Thus, by combining equations \eqref{eq:decomp}, \eqref{eq:intermediatemainbound1}, \eqref{eq:intermediatemainbound2}, and \eqref{eq:mainintermediate3},
\begin{align*}
    & \sumT \E\big[(\w_t - \u)^\top \bell_t \big] = \sumT \E\big[(\w(\Lstar_t) - \u)^\top \blhat_t\big] - 
    \sumT \E\big[(\w(\Lhat_t^m) - \u)^\top \bias_t\big]\\
    & + \sumT\Big( \E\big[(\w_t - \w(\bar{\L}_t^m))^\top \bell_t\big] + \E\big[(\w(\bar{\L}_t^m) - \w(\Lhat_t^m))^\top \bell_t\big] + \E\big[(\w(\Lhat_t^m) - \w(\Lstar_t))^\top \blhat_t\big] \Big) \\
    & \leq R(\u) - R(\w_1) + 8\bbound^2T + \sumT \left(8\fbound^2|m_t| + 8\fbound\E\left[\bigg\|\sum_{\tau \in m_t}(\bell_\tau - \blhat_\tau)\bigg\|_{R, \w_t}\right]\right)\\
    & \quad -\sumT \E\big[(\w(\Lhat_t^m) - \u)^\top \bias_t\big]~,
\end{align*}
which concludes the proof. 
\end{proof}

\corollarymain*
\begin{proof}
We are looking to control $\E\Big[\big\|\sum_{\tau \in \miss_t}(\bell_\tau - \blhat_\tau)\big\|_{R,\w_t}\Big]$ for a given $t \in [T]$. We start by considering
\begin{align*}
    \E\left[ \bigg\|\sum_{\tau \in \miss_t}(\bell_\tau - \blhat_\tau)\bigg\|_{R,\w_t}^2 \right]
& =
    \sum_{\tau \in m_t}\E\Big[\big\|\bell_\tau - \blhat_\tau\big\|_{R,\w_t}^2\Big]
\\& =
    \sum_{\tau \in m_t} \left( \E\Big[\big\|\blhat_\tau\big\|_{R,\w_t}^2\Big] - \E\Big[\big\|\bell_\tau\big\|_{R,\w_t}^2\Big] \right)
\\& \leq
    \sum_{\tau \in m_t} \E\Big[\big\|\blhat_\tau\big\|_{R,\w_t}^2\Big]~,
\end{align*}
where we used that $\E\Big[(\blhat_\tau - \bell_\tau)^\top\big(\nabla^2 R(\w(\Lhat_t))\big)^{-1} (\blhat_{\tau'} - \bell_{\tau'})\,|\,\mathcal{F}_t\Big] = 0$ for $\tau \neq \tau'$ in the first equality, and that $\E[\blhat_\tau\,|\,\mathcal{F}_t] = \bell_\tau$ in the second equality. 
In turn, the above together with Jensen's inequality implies that 
\begin{align}\label{eq:variancebound}
     \E\left[\bigg\|\sum_{\tau \in m_t}(\bell_\tau - \blhat_\tau)\bigg\|_{R, \w_t}\right] 
& \leq
    \sqrt{\sum_{\tau \in m_t}\E\Big[\big\|\blhat_\tau\big\|_{R, \w_t}^2\Big]} \nonumber \\
& \leq
    \sqrt{4\sum_{\tau \in m_t}\E\Big[\big\|\blhat_\tau\big\|_{R, \w_{\tau}}^2\Big]} 
\leq
    \sqrt{4 |m_t|\bbound^2}~, 
\end{align}
where in the second inequality we used Lemma~\ref{lem:induction} together with $4 \nabla^2 R( \w) \succeq \nabla^2 R( \w') \succeq \frac{1}{4} \nabla^2 R( \w)$ for all $\w \in \domainw$ and $\w' \in \dikin_R(\w, \half)$. Finally, the third inequality of~\ref{eq:variancebound} is due to the assumptions of Lemma~\ref{lem:main}. We conclude by substituting this bound into the results of Lemma~\ref{lem:main},
\begin{align*}
    \sumT \E\big[(\w_t - \u)^\top \bell_t \big] & \leq R(\u) - R(\w_1) + 8\bbound^2T  + \sumT \Big(8\fbound^2|m_t| + 16\sqrt{ |m_t|\fbound^2 \bbound^2}\Big) \\
    & \leq  R(\u) - R(\w_1) + 16\bbound^2 T + 16\fbound^2 \sumT |m_t|,
\end{align*}
where in the last inequality we used that $\sqrt{ab} \leq \half (a + b)$ for $a, b > 0$. 
\end{proof}

\lemmaFTRLnorm*
\begin{proof}
    By Taylor's theorem and the optimality of $\w(\L')$ we have that for some $\zeta$ on the line segment between $\w(\L')$ and $\w(\L)$
    \begin{align*}
        & {\L'}^\top \w(\L) + R(\w(\L)) - {\L'}^\top \w(\L') - R(\w(\L')) \\
        & \geq \frac{1}{2} (\w(\L') - \w(\L))^\top \nabla^2 R(\zeta) (\w(\L') - \w(\L)) \\
        & \geq \frac{1}{8}(\w(\L') - \w(\L))^\top \nabla^2 R(\x) (\w(\L') - \w(\L)),
    \end{align*}
    where the last inequality is due the assumption on $\nabla^2 R(\w)$, which is applicable because if $\w(\L'), \w(\L) \in \dikin_R(\x, \half)$ it implies that the line segment between $\w(\L')$ and $\w(\L)$ is also in $\dikin_R(\x, \half)$. Thus $\zeta \in \dikin_R(\x, \half)$.
    
    By Taylor's theorem we have that 
    \begin{align*}
        & {\L'}^\top \w(\L) + R(\w(\L)) - {\L'}^\top \w(\L') - R(\w(\L')) \\
        & = (\L' - \L)^\top(\w(\L) - \w(\L')) + {\L}^\top \w(\L) + R(\w(\L)) - {\L}^\top \w(\L') - R(\w(\L'))\\
        &  \leq (\L' - \L)^\top(\w(\L) - \w(\L')) \\
        & \leq \|\L' - \L\|_{R, \x} \|\w(\L) - \w(\L')\|_{R, \x}^\star~,
    \end{align*}
    where the first inequality is due to the optimality of $\w(\L)$ and the second inequality is H\"older's inequality. Thus, we may conclude that 
    \begin{align*}
         \|\L' - \L\|_{R, \x} \|\w(\L) - \w(\L')\|_{R, \x}^\star \geq \frac{1}{8} \Big(\|\w(\L) - \w(\L')\|_{R, \x}^\star\Big)^2~,
    \end{align*}
    which concludes the proof after multiplying both sides of the above inequality by $\frac{8}{\|\w(\L) - \w(\L')\|_{R, \x}^\star}$.
\end{proof}

\begin{restatable}{relemma}{lemmawindikin}
\label{lem:windikin}
    Suppose that $\nabla^2 R( \w') \succeq \frac{1}{4} \nabla^2 R( \w) $ for all $\w \in \domainw$ and $\w' \in \dikin_R(\w, \half)$ and for $R$ strictly convex and twice differentiable. Let $z \subset \nats$ be a finite set, and define $\L' = \L + \sum_{\tau \in z} \y_\tau$, where $\y_\tau \in \reals^K$. If $\sum_{\tau \in z}\|\y_\tau\|_{R, \w(\L)} \leq \tfrac{1}{16}$, then $\w(\L') \in \dikin_R(\w(\L), \half)$.
\end{restatable}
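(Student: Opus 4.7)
The plan is to reduce the statement to Lemma~\ref{lem:FTRLnorm} applied with anchor $\x = \w(\L)$. If that lemma could be used directly, it would give
\[
\|\w(\L') - \w(\L)\|^\star_{R, \w(\L)} \le 8 \|\L' - \L\|_{R, \w(\L)} \le 8 \sum_{\tau \in z} \|\y_\tau\|_{R, \w(\L)} \le \half,
\]
which is exactly the desired conclusion. The catch, and the main obstacle, is that the hypothesis of Lemma~\ref{lem:FTRLnorm} already requires $\w(\L') \in \dikin_R(\w(\L), \half)$, so a direct application is circular.

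To break the circularity, I plan to run a standard continuity bootstrap along the linear interpolation $\L_s := \L + s(\L' - \L)$ for $s \in [0, 1]$. First, I would observe that since $R$ is strictly convex and $\domainw$ is compact, the map $s \mapsto \w(\L_s)$ is continuous (by uniqueness of the FTRL minimizer together with a standard maximum-theorem argument), and hence $f(s) := \|\w(\L_s) - \w(\L)\|^\star_{R, \w(\L)}$ is continuous with $f(0) = 0$. The goal then reduces to showing $f(1) \le \half$.

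I would argue by contradiction: suppose $f(1) > \half$. By the intermediate value theorem there is a smallest $s^* \in (0, 1)$ with $f(s^*) = \half$, and for every $s \in [0, s^*]$ one has $\w(\L_s) \in \dikin_R(\w(\L), \half)$. At this $s^*$, both $\w(\L)$ (trivially) and $\w(\L_{s^*})$ lie in $\dikin_R(\w(\L), \half)$, so Lemma~\ref{lem:FTRLnorm} now legitimately applies to the pair $\L, \L_{s^*}$ with anchor $\w(\L)$, giving
\[
f(s^*) \le 8 \|\L_{s^*} - \L\|_{R, \w(\L)} = 8 s^* \bigg\| \sum_{\tau \in z} \y_\tau \bigg\|_{R, \w(\L)} \le 8 s^* \sum_{\tau \in z} \|\y_\tau\|_{R, \w(\L)} \le \frac{s^*}{2} < \half,
\]
which contradicts $f(s^*) = \half$. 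Hence $f(1) \le \half$, that is, $\w(\L') \in \dikin_R(\w(\L), \half)$.

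The hard part is entirely the circularity noted above; once the continuity/IVT bootstrap is in place, the main inequality is just Lemma~\ref{lem:FTRLnorm} together with the triangle inequality and the hypothesis $\sum_\tau \|\y_\tau\|_{R, \w(\L)} \le \tfrac{1}{16}$, with the extra factor $s^* < 1$ supplying the strict inequality that seals the contradiction. The auxiliary continuity of $s \mapsto \w(\L_s)$ is a standard consequence of strict convexity of $R$ and compactness of $\domainw$, and requires no delay- or FTRL-specific machinery.
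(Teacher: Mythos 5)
Your argument is correct, but it is a genuinely different route from the paper's. The paper proves the lemma directly: it shows that every point $\x$ on the boundary of $\dikin_R(\w(\L),\half)$ satisfies ${\L'}^\top\x + R(\x) \ge {\L'}^\top\w(\L) + R(\w(\L))$, using a second-order Taylor expansion around $\w(\L)$, the first-order optimality of $\w(\L)$, the Hessian comparison $\nabla^2 R(\zeta)\succeq\frac14\nabla^2 R(\w(\L))$ inside the ellipsoid, and H\"older's inequality; it then concludes via a segment-crossing argument from strict convexity that the minimizer $\w(\L')$ cannot lie outside the ellipsoid. You instead break the apparent circularity by a homotopy bootstrap: interpolate $\L_s=\L+s(\L'-\L)$, invoke continuity of $s\mapsto\w(\L_s)$ and the intermediate value theorem to locate the first time $s^*$ the iterate hits the boundary, and apply Lemma~\ref{lem:FTRLnorm} there (whose proof in the paper is independent of this lemma, so there is no logical circularity); the factor $s^*<1$ supplies the strict inequality that closes the contradiction. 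Both proofs are valid. The paper's is self-contained and purely algebraic, whereas yours reuses Lemma~\ref{lem:FTRLnorm} at the cost of one extra analytic ingredient: the continuity of the argmin map. That continuity is indeed standard for a strictly convex $R$ on a compact $\domainw$, but it deserves a line of care in the settings actually used here (log-barrier and self-concordant components of $R$ blow up at the boundary of $\domainw$, so one should argue via level sets or the uniqueness-plus-compactness argument you sketch). If you supply that, your proof stands; note also that the paper's boundary-comparison technique is essentially the quantitative content hiding inside Lemma~\ref{lem:FTRLnorm} itself, so the two proofs are, at bottom, rearrangements of the same Taylor-plus-optimality estimate.
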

\begin{proof}
Because of the strict convexity of $R$, to show that $\w(\L') \in \dikin_R(\w(\L), \half)$ it suffices to show that for all $\x$ on the boundary of $\dikin_R(\w(\L), \half)$
\begin{align}\label{eq:xbigger}
    {\L'}^\top \x + R(\x) \geq {\L'}^\top \w(\L) + R(\w(\L)).
\end{align}
To see why the strict convexity of $R$ is sufficient, suppose that all $\x$ on the boundary of $\dikin_R(\w(\L), \half)$ indeed satisfy equation~\eqref{eq:xbigger}. For the sake of contradiction suppose that $\w(\L')$ is not in $\dikin_R(\w(\L), \half)$. Let $\z = (1-a)\w(\L) + a \w(\L')$ be the point on the boundary of $\dikin_R(\w(\L), \half)$ on the segment between $\w(\L)$ and $\w(\L')$. Then
\begin{align*}
    {\L'}^\top \w(\L) + R(\w(\L)) & \leq {\L'}^\top \z + R(\z) \\
    & < (1 - a)({\L'}^\top \w(\L) + R(\w(\L))) + a({\L'}^\top \w(\L') + R(\w(\L'))) \\
    & \leq {\L'}^\top \w(\L) + R(\w(\L))~ ,
\end{align*}
where the last inequality is by definition of $\w(\L')$. Thus, we have a contradiction, which implies that if all $\x$ on the boundary of $\dikin_R(\w(\L), \half)$ satisfy equation~\eqref{eq:xbigger} then $\w(\L') \in \dikin_R(\w(\L), \half)$.

We proceed by showing that all $\x$ on the boundary of $\dikin_R(\w(\L), \half)$ satisfy equation~\eqref{eq:xbigger}. Denote by $\h = \x - \w(\L)$. Using Taylor's theorem, there exists $\zeta$ on the segment between $\x$ and $\w(\L)$ such that 
\begin{equation}\label{eq:taylor}
\begin{split}
    & {\L'}^\top \x + R(\x) - {\L'}^\top \w(\L) - R(\w(\L)) \\
    & = (\L' - \L)^\top\h + (\L + \nabla R(\w(\L)))^\top \h + \half \h^\top \nabla^2 R(\zeta) \h \\
    & \geq  (\L' - \L)^\top\h + \half \h^\top \nabla^2 R(\zeta) \h \\
    & \geq  (\L' - \L)^\top\h + \tfrac{1}{8} \h^\top \nabla^2 R(\w(\L)) \h
\end{split}
\end{equation}
where the first inequality is due to the optimality of $\w(\L)$ and the second inequality is because $\zeta,\w(\L)\in \dikin_R(\w, \half)$. 
Thus, by applying H\"older's inequality we can see that
\begin{align*}
    & {\L'}^\top \x + R(\x) - {\L'}^\top \w(\L) - R(\w(\L)) \\
    & \geq  (\L' - \L)^\top\h + \tfrac{1}{8} \h^\top \nabla^2 R(\w(\L)) \h \\
    & \geq -\sum_{\tau \in z}\|\y_\tau\|_{R, \w(\L)}\|\h\|_{R, \w(\L)}^\star + \tfrac{1}{8} \h^\top \nabla^2 R(\w(\L)) \h \\ 
    & = -\frac{1}{2} \sum_{\tau \in z}\|\y_\tau\|_{R, \w(\L)} + \tfrac{1}{32} \\
    & \geq 0
\end{align*}
where the equality is due to the fact that $\|\h\|_{R, \w(\L)}^\star = \half$, as $\x$ is on the boundary of $\dikin(\w(\L), \half)$ and the final inequality is due to the assumption that $\sum_\tau\|\y_\tau\|_{R, \w(\L)} \leq \tfrac{1}{16}$.
\end{proof}

\begin{restatable}{relemma}{lemmainduction}\label{lem:induction}
    Suppose that $4 \nabla^2 R( \w) \succeq \nabla^2 R( \w') \succeq \frac{1}{4} \nabla^2 R( \w)$ for all $\w \in \domainw$ and $\w' \in \dikin_R(\w, \half)$. Also suppose that $\|\blhat_t\|_{R, \w_t} \leq \frac{1}{64(1 + \dmax)}$  for all $t$ with probability 1.
    Then, for all $t$ and all $\delta \in [\dmax + 1]$, we have that $\w_{t+\delta} \in \dikin_R(\w_t, \half)$.
\end{restatable}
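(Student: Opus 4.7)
I plan to prove this by strong induction on $t$, using a slightly strengthened inductive claim: for every $t \geq 1$ and every $s$ with $\max(1, t - \dmax - 1) \leq s \leq t$, the iterate $\w_t$ lies in $\dikin_R(\w_s, \half)$. Relabeling $t = s + \delta$ recovers the lemma statement. The base case $t = 1$ is trivial since only $s = 1$ is admissible.

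For the inductive step, fix $t > 1$ and $s < t$ with $t - s \leq \dmax + 1$. I write $\Lhat_t = \Lhat_s + \sum_{\tau \in z} \blhat_\tau$, where $z = \obs_t \setminus \obs_s = \{\tau : s \leq \tau + d_\tau < t\}$. Using $d_\tau \leq \dmax$, I get $\tau \in [\max(1, s - \dmax),\, t-1]$, so $|z| \leq (t - s) + \dmax \leq 2(\dmax + 1)$. By Lemma~\ref{lem:windikin}, it then suffices to show $\sum_{\tau \in z}\|\blhat_\tau\|_{R, \w_s} \leq \tfrac{1}{16}$.

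The crux of the argument is transporting the local norm from $\w_\tau$ (where the hypothesis bounds it) to $\w_s$. For each $\tau \in z$ I split into two cases. If $\tau \leq s$, then $s - \tau \leq \dmax$ and I apply the inductive hypothesis to $t' = s$ with $s' = \tau$ (legal since $s < t$), obtaining $\w_s \in \dikin_R(\w_\tau, \half)$. If $\tau > s$, then $\tau - s \leq t - 1 - s \leq \dmax$ and I apply the inductive hypothesis to $t' = \tau$ with $s' = s$ (legal since $\tau < t$), obtaining $\w_\tau \in \dikin_R(\w_s, \half)$. In both cases, the Hessian-stability assumption yields $(\nabla^2 R(\w_s))^{-1} \preceq 4\, (\nabla^2 R(\w_\tau))^{-1}$, hence $\|\blhat_\tau\|_{R, \w_s} \leq 2 \|\blhat_\tau\|_{R, \w_\tau} \leq \tfrac{1}{32(1+\dmax)}$. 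Summing over $\tau \in z$ with $|z| \leq 2(\dmax+1)$ yields the desired $\tfrac{1}{16}$ budget, and Lemma~\ref{lem:windikin} closes the induction.

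The main obstacle will be avoiding circular reasoning: bounding $\|\blhat_\tau\|_{R, \w_s}$ requires exactly the kind of Dikin-proximity between FTRL iterates within a window of $\dmax+1$ rounds that the lemma itself asserts. The resolution is to symmetrize the inductive claim so that it applies regardless of whether $\tau$ is before or after $s$ — note that although the Dikin ellipsoid is not symmetric in its two arguments, either direction of membership is enough to compare the Hessians (up to a factor of $4$) and thus the local norms (up to a factor of $2$). The numerology is also tight by design: the factor $2(\dmax+1)$ coming from $|z|$ matches the slack $\tfrac{1}{64(1+\dmax)}$ in the per-step bound on $\|\blhat_\tau\|_{R,\w_\tau}$, absorbing the factor-of-$2$ loss from transporting the norm and leaving exactly the $\tfrac{1}{16}$ required by Lemma~\ref{lem:windikin}.
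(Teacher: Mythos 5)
Your proposal is correct and takes essentially the same route as the paper: induction on $t$, invoking Lemma~\ref{lem:windikin} after transporting each local norm $\|\blhat_\tau\|_{R,\w_s}\le 2\|\blhat_\tau\|_{R,\w_\tau}$ via the two-sided Hessian comparability and counting $|\obs_t\setminus\obs_s|\le 2(\dmax+1)$. The only difference is organizational — your symmetrized strong inductive claim handles the indices $\tau>s$ directly from the induction hypothesis, whereas the paper handles them by an inner cascade ($\w_{t+1},\w_{t+2},\dots$ shown to lie in $\dikin_R(\w_t,\half)$ one at a time), so your version is a cleaner formalization of the paper's ``repeat this argument'' step.
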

\begin{proof}
We prove the statement by induction on $t$.
For the base case, we need to show that for all $\delta \in [1 + \dmax]$, $\w_{1+\delta} \in \dikin_R(\w_1, \half)$. We start by showing that $\w_2 \in \dikin_R(\w_1, \half)$ where $\Lhat_2 = \Lhat_1 + \blhat_1\indicator\{1 \in o_2\}$. Since $\|\blhat_1\|_{R, \w_1} \leq \frac{1}{16}$ by assumption, this follows from Lemma~\ref{lem:windikin}. Now, $\Lhat_3 = \Lhat_1 + \blhat_1\indicator\{1 \in o_3\setminus o_2\} + \blhat_2\indicator\{2\in o_3\}$. Since $4 \nabla^2 R( \w) \succeq \nabla^2 R( \w') \succeq \frac{1}{4} \nabla^2 R( \w) $ for all $\w \in \domainw$ and $\w' \in \dikin_R(\w, \half)$ and $\|\blhat_t\|_{R, \w_t} \leq \frac{1}{64(1 + \dmax)}$ for all $t$, we have that 
\begin{align*}
    \|\blhat_1\|_{R, \w_1} + \|\blhat_2\|_{R, \w_1} \leq \|\blhat_1\|_{R, \w_1} + 2 \|\blhat_2\|_{R, \w_2} \leq \frac{1}{16}
\end{align*}
and Lemma~\ref{lem:windikin} implies that $\w_3 \in \dikin_R(\w_1, \half)$. We can now repeat this argument to show that $\w_{1 + \delta} \in \dikin_R(\w_1, \half)$ for $\delta \in [\dmax + 1]$, which completes the proof for the base case.

For the induction step, assume that $\w_t \in \dikin_R(\w_\tau,\half)$ for $\tau\in\{t-\dmax-1, \ldots, t-1\}$.
Recall that $\Lhat_{t+1} = \Lhat_t + \sum_{\tau \in \obs_{t+1} \setminus \obs_t} \blhat_{\tau}$.
Since $\obs_{t+1} \setminus \obs_t \subseteq \{t-\dmax, \ldots, t-1, t\}$, we have that
\begin{align*}
     \sum_{\tau \in \obs_{t+1} \setminus \obs_t} \|\blhat_{\tau}\|_{R, \w_t} \leq  2\sum_{\tau \in \obs_{t+1} \setminus \obs_t} \|\blhat_{\tau}\|_{R, \w_\tau} \leq \frac{|\obs_{t+1} \setminus \obs_t|}{32(\dmax + 1)} \leq \frac{1}{32}~,
\end{align*}
where in the first inequality we used that $\nabla^2 R(\w_t) \succeq \frac{1}{4} \nabla^2 R(\w_\tau)$ if $\w_t \in \dikin_R(\w_\tau, \half)$ which for $\tau \in \{t-\dmax, \ldots, t-1\}$ is true by the inductive assumption, and in the second inequality we used the assumption $\|\blhat_\tau\|_{R, \w_\tau} \leq \frac{1}{64(1 + \dmax)}$.
Then Lemma~\ref{lem:windikin} implies $\w_{t+1} \in \dikin_{R}(\w_t, \half)$ which in turn implies
$\nabla^2 R(\w_t) \succeq \frac{1}{4} \nabla^2 R(\w_{t+1})$. Using this last inequality, we can see that
\begin{align*}
    \sum_{\tau \in \obs_{t+2} \setminus \obs_t} \|\blhat_{\tau}\|_{R, \w_t} & \leq \sum_{\tau \in  \obs_{t+2} \setminus (\obs_t \cup \{t+1\})} \|\blhat_{\tau}\|_{R, \w_t} + \|\blhat_{t + 1}\|_{R, \w_t} \\
    & \leq  \frac{1}{32} + 2 \|\blhat_{t + 1}\|_{R, \w_{t+1}} \leq \frac{1}{16}~.
\end{align*}
Thus, by Lemma~\ref{lem:windikin}, $\w_{t+2} \in \dikin_R(\w_t, \half)$. We can repeat this argument to show that for all $\delta \in [1 + \dmax]$, $\w_{t+\delta} \in \dikin_R(\w_t, \half)$.


\end{proof}

\begin{lemma}[Be-The-Leader Lemma]\label{lem:BTL}
For any fixed $\u \in \domainw$ we have that
\begin{align*}
    \sumT \blhat_t^\top (\w(\Lstar_t) - \u) &\leq R(\u) - R(\w_1) 
\end{align*}
\end{lemma}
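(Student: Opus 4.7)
The plan is to prove this by a standard Be-the-Leader induction, with the regularizer $R$ playing the role of a ``round zero'' loss. Introduce the auxiliary sequence $f_0(\w) = R(\w)$ and $f_t(\w) = \blhat_t^\top \w$ for $t \geq 1$. Since $\w_1 = \argmin_{\w \in \domainw} R(\w)$ by definition of FTRL on an empty observation set, and $\w(\Lstar_t) = \argmin_{\w \in \domainw} \sum_{s=0}^t f_s(\w)$, the iterates $\w(\Lstar_t)$ are exactly the be-the-leader iterates for the sequence $f_0, f_1, \ldots, f_T$ (with $\w(\Lstar_0) := \w_1$).

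First I would establish the abstract claim that for any sequence of functions $f_0, \ldots, f_T$ on $\domainw$ with minimizers $\w^\star_t = \argmin_{\w \in \domainw} \sum_{s=0}^t f_s(\w)$, one has $\sum_{t=0}^T f_t(\w^\star_t) \leq \sum_{t=0}^T f_t(\u)$ for every $\u \in \domainw$. I would argue by induction on $T$. The base case $T=0$ is just the definition of $\w^\star_0$. For the inductive step, applying the hypothesis with $\u$ replaced by $\w^\star_T$ gives $\sum_{t=0}^{T-1} f_t(\w^\star_t) \leq \sum_{t=0}^{T-1} f_t(\w^\star_T)$; adding $f_T(\w^\star_T)$ to both sides and then invoking the optimality of $\w^\star_T$ yields $\sum_{t=0}^T f_t(\w^\star_t) \leq \sum_{t=0}^T f_t(\w^\star_T) \leq \sum_{t=0}^T f_t(\u)$.

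Specializing the abstract claim to the sequence $R, \blhat_1^\top(\cdot), \ldots, \blhat_T^\top(\cdot)$ gives
\[
    R(\w_1) + \sum_{t=1}^T \blhat_t^\top \w(\Lstar_t) \;\leq\; R(\u) + \sum_{t=1}^T \blhat_t^\top \u,
\]
and rearranging this inequality yields the statement of the lemma. No regularity or convexity assumptions on $R$ or the losses are needed beyond the existence of the minimizers (guaranteed by the FTRL setup in the paper). There is no substantive obstacle here: the only care is in bookkeeping the ``round zero'' absorption of $R$ and in noting that $\w(\Lstar_0) = \w_1$, so the argument is essentially mechanical once the auxiliary sequence is introduced.
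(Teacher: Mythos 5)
Your proof is correct and is essentially the paper's own argument: the same be-the-leader induction, with your ``round zero'' loss $f_0 = R$ simply making explicit the role the regularizer already plays in the paper's base case and inductive step. Nothing differs in substance.
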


\begin{proof}
We will prove the statement by induction on $T$. The base case holds by definition of $\w_1$. For the induction step, assume that
\begin{align*}
    \sum_{t=1}^{T-1} \blhat_t^\top \w(\Lstar_t) + R(\w_1) \leq \sum_{t=1}^{T-1} \blhat_t^\top \w + R(\w)
\end{align*}
for any $\w \in \domainw$. Adding $\blhat_T^\top\w(\Lstar_t)$ to both sides of the above inequality and setting $\w = \w(\Lstar_t)$ on the right-hand side of the above inequality we find
\begin{align*}
    \sum_{t=1}^{T} \blhat_t^\top \w(\Lstar_t) + R(\w_1) & \leq \argmin_{\w \in \domainw } \sum_{t=1}^{T} \blhat_t^\top \w + R(\w) \\
    & \leq  \sum_{t=1}^{T} \blhat_t^\top \u + R(\u),
\end{align*}
which proves the statement after reordering the above inequality.
\end{proof}

\newpage
\section{Deferred Proof for Combinatorial Semi-Bandits (Section~\ref{sec: CMAB})}\label{app: CMAB}
\thCMAB*
\begin{proof}
    We are looking to apply Corollary~\ref{cor:main}. We start by showing that indeed $4 \nabla^2 R( \w) \succeq \nabla^2 R( \w') \succeq \frac{1}{4} \nabla^2 R( \w)$ for all $\w \in \domainw$ and $\w' \in \dikin_R(\w, \half)$. With $\phi$, as defined in Lemma~\ref{lem:dikin implies factor 2},
    \begin{align*}
        \left(\nabla^2 R(\w)\right)(i, i) = \frac{1}{\gamma \w^2(i)} + \frac{1}{\eta \w(i)} \geq \frac{1}{\gamma \w^2(i)} = \left(\nabla^2 \phi(\w)\right)(i, i)~,
    \end{align*} 
    and we conclude that $\nabla^2 R(\w) \succeq \nabla^2 \phi(\w)$ as both matrices are diagonal. Together with $\domainw \subseteq \reals_+^\actiondim$, $R$ being strictly convex, and $\gamma \in (0, 1)$, we are in a position to apply Lemma~\ref{lem:dikin implies factor 2} to conclude that $\frac{1}{2} \w(i)\leq \w'(i) \leq  2 \w(i)$ for all $\w' \in \dikin_R(\w,\frac{1}{2})$ and all $i \in [\actiondim]$. We can use this fact to show that
    \begin{align*}
        \left(\nabla^2 R(\w')\right)(i, i) = \frac{1}{\gamma \w'^2(i)} + \frac{1}{\eta \w'(i)}
        \geq \frac{1}{\gamma 4 \w^2(i)} + \frac{1}{\eta 2 \w(i) }
        \geq \frac{1}{4} \left(\nabla^2 R(\w)\right)(i, i)
    \end{align*}
    and 
    \begin{align*}
        \left(\nabla^2 R(\w')\right)(i, i) = \frac{1}{\gamma \w'^2(i)} + \frac{1}{\eta \w'(i)}
        \leq \frac{1}{\gamma \frac{1}{4} \w^2(i)} + \frac{1}{\eta \frac{1}{2} \w(i) }
        \leq 4 \left(\nabla^2 R(\w)\right)(i, i)~.
    \end{align*} 
    With $\nabla^2 R(\w)$ being a diagonal matrix, we can conclude that $4 \nabla^2 R( \w) \succeq \nabla^2 R( \w') \succeq \frac{1}{4} \nabla^2 R( \w)$ for all $\w \in \domainw$ and $\w' \in \dikin_R(\w, \half)$.
    
    As the next step, we find an appropriate $\fbound$ such that $\|\bell_t\|_{R, \w} \leq \fbound \leq \frac{1}{16\dmax}$. For that we use $\sumdim \w(i) \leq \combm$, $\|\bell\|_\infty \leq 1$, and $\eta + \gamma \w(i) \geq \gamma \w(i)$ to bound 
    \begin{align*}
        \|\bell_t\|_{R, \w} = \sqrt{ \sumdim \bell_t(i)^2 \frac{\eta \gamma \w^2(i)}{\eta + \gamma \w(i)}} \leq \underbrace{\sqrt{\eta \combm}}_{\fbound}~. \numberthis \label{eqn:CMABfbound}
    \end{align*}
    and it is clear that $\fbound \leq \frac{1}{16\dmax}$, by $\eta \leq \frac{1}{16^2 \combm \dmax^2}$. Next is $\E \left[ \|\blhat_t\|_{R, \w_t}^2 \right] \leq \bbound^2$, for which we commence by using the tower rule.
    \begin{align*}
        \E \left[ \|\blhat_t\|_{R, \w_t}^2 \right] &= \E_{\filobs_t} \left[ \E_{\a_t} \left[ \|\blhat_t\|_{R, \w_t}^2 | \filobs_{t} \right] \right]~.
    \end{align*}
    Next we consider $\E_{\action_t} \left[ \|\blhat_t\|_{R, \w_t}^2 | \filobs_{t} \right]$ in isolation
    \begin{align*}
        \E_{\a_t} \left[ \|\blhat_t\|_{R, \w_t}^2 | \filobs_{t} \right] &= \E_{\a_t} \left[ \sumdim \left( \frac{\action_t(i) \bell_t(i)}{\w_t(i)} \right)^2 \left(\nabla^{2} R(\w_t)\right)^{-1}(i, i) | \filobs_{t} \right]\\
        &= \sumdim \frac{\bell_t(i)^2}{\w_t(i)}\frac{\eta \gamma \w_t^2(i)}{\eta + \gamma \w_t(i)}
        \leq \underbrace{\eta \actiondim}_{\bbound^2}~, \numberthis \label{eqn:CMABbbound}
    \end{align*}
    where we also used that $\E_{\action_t}[\action_t] = \w_t$, and $\eta + \gamma \w(i) \geq \gamma \w(i)$.
    Next is showing that $\|\blhat_t\|_{R, \w_t} \leq \frac{1}{64(1 + \dmax)}$:
    \begin{align*}
        \|\blhat_t\|_{R, \w(\hat{L}_t)} = \sqrt{\sumdim \left( \frac{\action_t(i) \bell_t(i)}{\w_t(i)} \right)^2 \frac{\eta \gamma \w_t^2(i)}{\eta + \gamma \w_t(i)}}
        \leq \sqrt{ \gamma \combm }
        \leq \frac{1}{64 (1 + \dmax)}~,
    \end{align*}
    where we used that $\gamma \leq \frac{1}{64^2 \combm (1 + \dmax)^2}$. Finally we show that $\blhat_\tau$ and $\blhat_{\tau'}$ are independent for all $\tau, \tau' \in \miss_t$ where $\tau' \neq \tau$. Recall that $\blhat_\tau(i) = \frac{\action_\tau(i) \bell_\tau(i)}{\w_\tau(i)}$, for all $i$. Conditioned on the observed history $\fil_t$, the only random element of $\blhat_{\tau'}$ is $\action_{\tau'} \sim \p_{\tau'}$. Since $\blhat_\tau$ can not be used in round $t$ to compute $\p_{\tau'}$ we have that $\blhat_{\tau'}$ is independent of $\blhat_\tau$. We conclude that
    \begin{align*}
        &\E\Big[(\blhat_\tau - \bell_\tau)^\top\big(\nabla^2 R(\w_t)\big)^{-1} (\blhat_{\tau'} - \bell_{\tau'})\,\Big|\, \fil_t\Big]\\
        &= \E_{\blhat_{\tau}}\Big[\E\Big[(\blhat_\tau - \bell_\tau)^\top\big(\nabla^2 R(\w_t)\big)^{-1} (\blhat_{\tau'} - \bell_{\tau'})\,\Big|\, \fil_t, \blhat_{\tau}\Big]\Big] = 0
    \end{align*}
    where we used that $\blhat_{\tau'}$ is and unbiased estimator of $\bell_{\tau'}$.
    Now we are able to apply Corollary~\ref{cor:main}, from which it follows that for any $\u \in \domainw$
    \begin{align}
        \E \left[ \sumT (\action_t - \u)^\top \bell_t \right] &\leq R(\u) - R(\w_1) + 16 \eta \actiondim T + 16 \eta \combm \totaldelay~, \label{eqn:CMABcor2}
    \end{align}
    by substituting $\fbound$ from equation~\eqref{eqn:CMABfbound} and ${\bbound^2}$ from equation~\eqref{eqn:CMABbbound}.
    The next step is finding an appropriate bound on $R$. We can do this for all $\u \in \domainw$ for the negative entropy component as follows
    \begin{align*}
        - \sumdim \u(i) \ln \u(i) &= \|\u\|_1 \sumdim \frac{\u(i)}{\|\u\|_1} \ln \frac{1}{\w(i)}\\
        &\leq \|\u\|_1 \ln \l( \sumdim \frac{\u(i)}{\|\u\|_1} \frac{1}{\u(i)} \r)\\
        &\leq \|\u\|_1 \ln \l( \frac{\actiondim}{\|\u\|_1} \r) + \|\u\|_1
        \leq \combm \l( 1 + \ln \l( \frac{\actiondim}{\combm} \r)\r), \numberthis \label{eqn:CMABentropybound}
    \end{align*}
    where we used Jensen's inequality in the second step and the fact that $x \ln(\tfrac{K}{x}) + x$ is in increasing on $x \in [1, \actiondim]$ in the last inequality. The negative logarithm component however is unbounded and tends to infinity when any element of $\u$ tends to 0. Thus we cannot compare to $\a^*$ directly, which might lie on the boundary on $\domainw$. Instead we define $\u = (1 - \combslider)\action^* + \combslider \w_1$ for an $\combslider \in [0, 1]$. $\combslider$ now acts as a trade-off between an upper bound on the regularizer and an additional bias-like term that stems from comparing $\a^*$ to $\u$ in terms of pseudo-regret. We now bound the negative logarithm of $\u$ by simply using $(1 - \combslider)\action^*(i) \geq 0$
    \begin{align*}
        \ln(\w_1(i))-\ln(\u(i)) = \ln\l(\frac{\w_1(i)}{(1 - \combslider)\action^*(i) + \combslider\w_1(i)}\r)
        \leq \ln\l( \frac{1}{\combslider}\r). \numberthis \label{eqn:CMABneglogbound}
    \end{align*}
    Combining equation~\eqref{eqn:CMABentropybound} and equation~\eqref{eqn:CMABneglogbound} allows us to bound $R(\u) - R(\w_1)$
    \begin{align*}
        R(\u) - R(\w_1) &= \sumdim \Big(\frac{\u(i)}{\eta}\ln(\u(i)) - \frac{1}{\gamma}\ln(\u(i))\Big)
        - \sumdim \Big(\frac{\w_1(i)}{\eta}\ln(\w_1(i)) - \frac{1}{\gamma}\ln(\w_1(i))\Big)\\
        &\leq \frac{ \combm \l( 1 + \ln \l( \frac{\actiondim}{\combm} \r)\r)}{\eta} + \frac{\actiondim \ln\l( \frac{1}{\combslider}\r)}{\gamma}~, \numberthis \label{eqn:CMABregbound}
    \end{align*}
    where we applied that $\ln(x) \leq 0$ for all $x \in (0, 1)$.
    To finish the proof, we start from the regret
    \begin{align*}
        \regret_T &= \E \left[ \sumT (\action_t - \action^*)^\top \bell_t \right]
        = \E \left[ \sumT (\action_t - \u)^\top \bell_t \right] + \combslider \E \left[ \sumT (\w_1 - \action^*)^\top \bell_t \right]\\
        &\leq \E \left[ \sumT (\action_t - \u)^\top \bell_t \right] + 2 \combslider \combm T~,
    \end{align*}
    where we bound $(\w_1 - \action^*)^\top \bell_t \leq 2 \combm$ in the inequality. We continue by using equation~\eqref{eqn:CMABcor2} and equation~\eqref{eqn:CMABregbound}
    \begin{align*}
        \E \left[ \sumT (\action_t - \u)^\top \bell_t \right] + 2 \combslider \combm T &\leq R(\u) - R(\w_1) + 16 \eta \actiondim T + 16 \eta \combm \totaldelay + 2 \combslider \combm T\\
        &\leq \frac{ \combm \l( 1 + \ln \l( \frac{\actiondim}{\combm} \r)\r)}{\eta} + \frac{\actiondim \ln\l( \frac{1}{\combslider}\r)}{\gamma} + 16 \eta \actiondim T + 16 \eta \combm \totaldelay + 2 \combslider \combm T \\
        & =  \frac{ \combm \l( 1 + \ln \l( \frac{\actiondim}{\combm} \r)\r)}{\eta} + \frac{\actiondim \ln\l( T \r)}{\gamma} + 16 \eta \actiondim T + 16 \eta \combm \totaldelay + 2 \combm~,
    \end{align*}
    where in the equality we set $\combslider = \frac{1}{T}$.
    Substituting
    \[
    \gamma = \frac{1}{64^2 \combm (1 + \dmax)^2}\qquad\text{and}\qquad\eta = \min \l\{ \frac{1}{16^2 \combm \dmax^2}, \frac{\sqrt{\combm \l( 1 + \ln \l( \frac{\actiondim}{\combm} \r)\r) }}{4 \sqrt{(\combm \totaldelay + \actiondim T) } } \r\}
    \]
    yields
    \begin{align*}
        \regret_T & \leq 8\sqrt{\combm \l( 1 + \ln \l( \frac{\actiondim}{\combm} \r)\r)(\actiondim T + \combm \totaldelay)}\\
        &\quad+ 64^2 \combm (1 + \dmax)^2 \actiondim \ln\l( T \r) + 2^8 \combm^2 \dmax^2 \l( 1 + \ln \l( \frac{\actiondim}{\combm} \r)\r)~.
    \end{align*}
    
    
\end{proof}

\thLBcombinatorial*
\begin{proof}
    By \citet{audibert2014regret}, we have that any algorithm without delay must suffer at least $\Omega(\sqrt{BKT})$ regret in the combinatorial semi-bandit setting. 
    
    Next, we assume full information feedback, which is easier from the point of view of the algorithm. We take inspiration from \citet[Lemma~3]{langford2009slow}. For simplicity we will assume that $T/d$ is an integer. We divide the $T$ rounds into $T/d$ blocks of $d$ rounds. We take the losses of the lower bound for $B$-sets in \citep[Section~4]{koolen2010hedging}, which states that any algorithm in the full information setting must suffer at least $\Omega(B\sqrt{T'})$ regret after $T'$ rounds. We take the loss of the first round of the lower bound \citep{koolen2010hedging} and copy it $d$ times, which we use as the losses for the first block. We repeat this process for the remaining blocks. Since the algorithm can not respond to the copied losses, we must have that any algorithm must suffer at least $\Omega(d B\sqrt{T/d}) = \Omega(B\sqrt{dT})$ regret, which completes the proof. 
\end{proof}

\newpage
\section{Deferred Proofs for Adversarial MDPs (Section \ref{sec:MDPs})}
\label{appendix: MDPs}

\thmMDPmain*

\begin{proof}
    First, we decompose 
    \begin{align*}
        \regret_T = 
        \sum_{t=1}^T\langle \w^{\pi_{t}} - \w^{\star}, \mdpell_{t} \rangle
        = \underbrace{
        \sum_{t=1}^T \langle\w^{\pi_{t}} - \w_{t}, \mdpell_{t}\rangle
                    }_{\textsc{Error}} 
            +  \underbrace{ \sum_{t=1}^T
            \langle\w_{t} - \tilde{\w}^{\star}, \mdpell_{t}\rangle
                        }_{\textsc{Reg}}
            +  \underbrace{ \sum_{t=1}^T
            \langle \tilde{\w}^{\star} - \w^{\star}, \mdpell_{t}\rangle
                        }_{\textsc{Shift-penalty}},
    \end{align*}
    where, by Lemma~\ref{lem:MDPs known p trancation error}, \textsc{Error} is bounded by $2H$ and $\tilde{w}^{\star} \in \domainw$ exists such that \textsc{Shift-penalty} is bounded by $2H$. For \textsc{Reg} we use Lemma~\ref{lem:main}. Much like in the proof of Lemma~\ref{th: CMAB regret}, $4 \nabla^2 R( \w) \succeq \nabla^2 R( \w') \succeq \frac{1}{4} \nabla^2 R( \w) $ for all $\w \in \domainw$ and $\w' \in \dikin_R(\w, \half)$. For any $t$ and $\w \in \domainw$
    \[
        \| \mdpell_t \|_{R,\w} 
        \leq 
        \sqrt{\eta \sum_{h,s,a} \w_h(s,a) \mdpell_h(s,a)^2} 
        \leq 
        \sqrt{\eta \sum_{h,s,a} \w_h(s,a)} 
        = 
        \sqrt{\eta H} \eqqcolon
        \alpha
        \leq \frac{1}{16 (1+d_{max})}
        ,
    \]
    where the last inequality is by the definition of $\eta$. For any $t$,
    \begin{align*}
        \bbE[\| \hat{\mdpell}_{t} \|_{R, \w_{t}}^{2}] 
        & =
        \eta \bbE\l[\sum_{h,s,a} \w_{t,h}(s,a) \hat{\mdpell}_{t,h}(s,a)^{2}\r] 
        \leq 
        \eta \bbE \l[\sum_{h,s,a} \frac{ \bbE\l[ \bbI\{s_{t,h}=s,a_{t,h}=a\} \mid \calF_{t}\r]}{ \u_{t,h}(s,a)}\r]  
        \\
        & = 
        \eta \bbE\l[ \sum_{h,s,a} \frac{ \w_{h}^{\pi_{t}}(s,a)}{ \u_{t,h}(s,a)}\r]
        \leq 
        \eta HSA 
        \eqqcolon 
        \beta^{2},                         
    \end{align*}
    where the inequalities follow since $\u_{t,h}(s,a) = \max_{\hat{p} \in \calP} \w^{\pi_\tau,\hat{p}}_h(s,a) \ge \max \{ \w_{t,h}(s,a) , \w^{\pi_t}_{h}(s,a) \}$.
    Finally, for all $t$, 
    \begin{align*}
        \|\hat{\mdpell}_{t} \|_{R,\w_{t}} 
        \leq \sqrt{ \gamma \sum_{h,s,a} \w_{t,h}(s,a)^{2} \hat{\mdpell}_{h}(s,a)^{2} }
    \leq \sqrt{ \gamma\sum_{h,s,a} \bbI\{s_{t,h}=s,a_{t,h}=a\}}
    =    \sqrt{\gamma H}
        \leq \frac{1}{64(1+d_{max})},
    \end{align*}
    where the second is since $\u_{t,h}(s,a) \geq \w^{\pi_t}_{h}(s,a)$ and the last is by definition of $\gamma$.
    Thus, applying Lemma \ref{lem:main} with $b_t = \bbE[\hat \mdpell_t - \mdpell_t \mid \calF_t]$, we get
    \begin{align*}
        \textsc{Reg} & \leq \underbrace{R(\tilde w^\star) - R(w_1)}_{\textsc{Penalty}} + 8\eta H S A T + 8 \eta H (T + D)
        \\
        & \quad + \underbrace{\sum_{t=1}^T \bbE[\w(\hat \L_t^m)^\top 
                                (\mdpell_t - \hat \mdpell_t)]}_{\textsc{Bias}_1} 
                + \underbrace{\sum_{t=1}^T \bbE[\tilde {\w^{\star}}^\top (\hat \mdpell_t - \mdpell_t)]}_{\textsc{Bias}_2}
                + 8 \sqrt{\eta H} \underbrace{\sum_{t=1}^T \bbE[ \|\sum_{\tau \in m_t} (\mdpell_\tau - \hat \mdpell_\tau) \|_{R,\w_t} ]}_{\textsc{Drift}}.
    \end{align*}
    Using standard arguments, $\textsc{Penralty} \leq \frac{4 H S^{2} A\log (H S A T)}{\gamma} + \frac{2 H \log(S A T)}{\eta}$ since $\w_1,\tilde{\w}^\star \in \Omega$.
    Recall that by definition $\u_{t,h}(s,a) \geq \w^{\pi_t}_{h}(s,a)$. Thus, $\bbE[\hat \mdpell_{t,h}(s,a) \mid \calF_t ] \leq \mdpell_t$ and $\textsc{Bias}_2 \leq 0$. 
    $\textsc{Bias}_1$ is the bias of the estimator due to the fact that we use an upper confidence bound on the occupancy measure instead of the actual occupancy measure. By Lemma~\ref{lem:MDPs bias1} we have $\textsc{Bias}_1 \leq 8 H^2 S$.
    Finally, for the $\textsc{Drift}$ term, for each $t$,
    \begin{align}
\nonumber
    \bbE & \|\sum_{\tau  \in m_{t}}(\mdpell_{\tau} - \hat{\mdpell}_{\tau})\|_{R,\w_{t}} 
                  =\bbE \sqrt{(\sum_{\tau \in m_{t}}\mdpell_{\tau} - \hat{\mdpell}_{\tau}) \nabla^{-2}R(\w_{t}) (\sum_{\tau \in m_{t}}\mdpell_{\tau} - \hat{\mdpell}_{\tau})}
                \\
                \nonumber
                & =\bbE \sqrt{  \sum_{\tau \in m_{t}} \|(\mdpell_{\tau} - \hat{\mdpell}_{\tau})\|_{R,\w_{t}}^{2} 
                    + \sum_{\tau \in m_{t}} \sum_{\tau' \in m_{t} \backslash \{\tau\}}
                    (\mdpell_{\tau} - \hat{\mdpell}_{\tau}) \nabla^{-2}R(\w_{t}) 
                    (\mdpell_{\tau'} - \hat{\mdpell}_{\tau'})}
                \\
                & \leq \sqrt{ \sum_{\tau \in m_{t}} \bbE \|(\mdpell_{\tau} - \hat{\mdpell}_{\tau})\|_{R,\w_{t}}^2 }
                    + \bbE \sqrt{\sum_{\tau \in m_{t}} \sum_{\tau' \in m_{t} \backslash \{\tau\}} 
                        (\mdpell_{\tau} - \hat{\mdpell}_{\tau}) \nabla^{-2}R(\w_{t}) 
                        (\mdpell_{\tau'} - \hat{\mdpell}_{\tau'})},
        \nonumber
    \end{align}
    where the inequality is by $\sqrt{a+b} \leq \sqrt{a} + \sqrt{b}$ and Jensen.
    For the first term, by Lemma~\ref{lem:induction},
    \begin{align*}
	\sum_{\tau \in m_{t}} \bbE\| (\mdpell_{\tau}-\hat{\mdpell}_{\tau}) \|_{R,\w_{t}}^{2} 
                & \leq 4 \sum_{\tau\in m_{t}} \bbE\| (\mdpell_{\tau}-\hat{\mdpell}_{\tau}) \|_{R,\w_{\tau}}^{2}
                    \leq 4 \sum_{\tau\in m_{t}} \bbE \|\mdpell_{\tau}\|_{R,\w_{\tau}}^{2}
                       + 4 \sum_{\tau\in m_{t}} \bbE\|\hat{\mdpell}_{\tau} \|_{R,\w_{\tau}}^{2} \\
                &  \leq 4|m_{t}|(\alpha^{2}+\beta^{2})\leq 8\eta HSA|m_{t}|.
\end{align*}
    In Lemma~\ref{lem:MDPs bias3} we bound the second term similarly to $\textsc{Bias}_1$ by $4\sqrt{\frac{H^2S}{T}}$ due to the estimator's small bias.
    Overall,
    \begin{align*}
    	\sqrt{\eta H}\cdot & \textsc{Drift} 
                    \leq 8 \eta H\sum_{t=1}^{T}\sqrt{SA|m_{t}|} + 4 \sqrt{ \eta H^{3} S T}
                        \leq 8 \eta H \sum_{t=1}^{T} \sqrt{S^{2}A^{2} + |m_{t}|^{2}}  + 4 \sqrt{ \eta H^{3} S T}
                        \\
                     & \leq 8 \eta H S A T + 8 \eta H\sum_{t=1}^{T} |m_{t}| + 4 \sqrt{ \eta H^{3} S T}
                        \leq 8 \eta H S A T + 8 \eta H D  + 4 \sqrt{ \eta H^{3} S T},                                 
    \end{align*}
    where we used $ab \leq a^2 + b^2$ and $\sum_{t=1}^T |m_t| = D$. 
    Finally, we sum all the different terms.
\end{proof}
\lemmaMDPestimatorbias*
\begin{proof}
\begin{enumerate}
    \item 
    Define $\tilde{p} = \{ \tilde{p}_h:\mdpS \times \mdpA \to \Delta_{\mdpS} \}_{h=1}^H$ by $\tilde{p}_h(s' \mid s,a) = (1- \frac{1}{THSA}) p_h(s' \mid s,a) + \frac{1}{THS^2A}$ and notice that $\tilde{p} \in \calP$ since $|p_h(s' \mid s,a) - \tilde{p}_h(s' \mid s,a)| \le \frac{1}{THSA}$.
    Next, let $\pi_u$ be the uniformly random policy, and define $\tilde{\w} = (1 - \frac{1}{T}) \w + \frac{1}{T} \w^{\pi_u,\tilde{p}}$.
    It holds that $\tilde{\w} \in \Delta(\calP)$ because $\Delta(\calP)$ is a convex set.
    Moreover, notice that $\w^{\pi_u,\tilde{p}}_h(s,a,s') \ge \frac{1}{(T H S^2 A)^2 A}$ which implies that $\tilde{\w}_h(s,a,s') \ge \frac{1}{T^3 H^2 S^4 A^2}$.
    Thus, $\tilde{\w} \in \domainw$.
    Finally,
    \begin{align*}
        \Vert \w -  \tilde{\w} \Vert_1
        & =
        \sum_{h,s,a,s'} \left| \w_h(s,a,s') -  \tilde{\w}_h(s,a,s') \right|
        \\
        & =
        \sum_{h,s,a,s'} \left| \frac{1}{T} \w_h(s,a,s') - \frac{1}{T} \w^{\pi_u,\tilde{p}}_h(s,a,s') \right|
        \\
        & \le
        \frac{1}{T} \sum_{h,s,a,s'} \w_h(s,a,s') + \frac{1}{T} \sum_{h,s,a,s'} \w^{\pi_u,\tilde{p}}_h(s,a,s')
        =
        \frac{2H}{T}.
    \end{align*}

    \item
    Define loss function $\tilde{\mdpell}_h(s,a) = \text{sign}(\w^{\pi}_h(s,a) - \w_{t,h}(s,a) )$ and note that $\| \w^{\pi} - \w \|_1 = V^{\pi,p,\tilde{\mdpell}}_1(\sinit) - V^{\pi, \hat p ,\tilde{\mdpell}}_1(\sinit)$ for some $\hat p \in \calP$. Combining Lemma~\ref{lem: value diff}\footnote{
    We note that \cite{even2009online} (see also \cite{shani2020optimistic}) apply the value difference lemma (Lemma \ref{lem: value diff}) on positive losses (or rewards), where here we use loss function supported in $[-1,1]$. However, the proof of the lemma in fact holds for any loss functions $\tilde{\mdpell}, \mdpell \subseteq \bbR^{H S A}$.
    } and the fact that $\| p - \hat{p} \|_\infty \leq \frac{1}{THSA}$ proves that $\| \w^{\pi} - \w \|_1 \leq \frac{2H}{T}$. 
    Now, let $\hat p^{h,s}$ be the transition function that corresponds to $\u_h(s)$. We have that, $\|\hat p^{h,s} - \hat p\|_\infty \leq \|\hat p^{h,s} -  p\|_\infty + \| p - \hat p\|_\infty \leq \frac{2}{THSA}$. Thus, using the same argument as the above,
    \[
        \| \u - \w \|_1 \leq \sum_{h,s}\| \w^{\pi,\hat p^{h,s}} - \w \|_1 \leq \frac{4 H^2 S}{T}.
    \]
    \end{enumerate}
\end{proof}

\begin{lemma}[$\textsc{Bias}_1$]
    \label{lem:MDPs bias1}
    When running Delayed FTRL for adversarial MDPs we have,
    \[
        \textsc{Bias}_1 = \sum_{t=1}^T \bbE[\w(\hat \L_t^m)^\top 
                                (\mdpell_t - \hat \mdpell_t)] \leq 8 H^2 S.
    \]
\end{lemma}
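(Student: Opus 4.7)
The plan is threefold: (i) use the tower property to pass to the conditional bias of $\hat{\mdpell}_t$ under $\pi_t$, (ii) exploit the log-barrier component of $R$ to obtain a coordinate-wise upper bound on $\w(\hat{\L}_t^m)$, and (iii) invoke Lemma~\ref{lem:MDPs known p trancation error} to control the gap between the upper occupancy bound $\u_t$ and the true occupancy $\w^{\pi_t}$.

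First I would condition on the $\sigma$-algebra generated by all actions of rounds $1,\dots,t-1$. Since the adversary is oblivious, every quantity built from past actions and past losses---in particular $\hat{\L}_t^m$, hence $\w(\hat{\L}_t^m)$, as well as $\pi_t$ and $\u_t$---becomes deterministic under this conditioning, while $a_t\sim\pi_t$ is the only remaining randomness. A direct computation then yields
\begin{align*}
    \bbE\bigl[\hat{\mdpell}_{t,h}(s,a) \mid \sigma(a_1,\dots,a_{t-1})\bigr]
    =
    \frac{\w^{\pi_t}_h(s,a)}{\u_{t,h}(s,a)}\,\mdpell_{t,h}(s,a),
\end{align*}
so that the conditional expectation of the $t$-th summand equals
\begin{align*}
    \sum_{h,s,a}\w(\hat{\L}_t^m)_h(s,a)\,\mdpell_{t,h}(s,a)\,\frac{\u_{t,h}(s,a)-\w^{\pi_t}_h(s,a)}{\u_{t,h}(s,a)}.
\end{align*}

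The heart of the argument is to bound the ratio $\w(\hat{\L}_t^m)_h(s,a)/\u_{t,h}(s,a)$ by a constant. Mimicking the CMAB proof, the stability of the iterates already verified in the proof of the MDP theorem, together with Lemmas~\ref{lem:induction} and~\ref{lem:windikin} as applied inside Lemma~\ref{lem:main}, places $\w(\hat{\L}_t^m)$ inside $\dikin_R(\w_t,\tfrac12)$. Lemma~\ref{lem:dikin implies factor 2} applied to the log-barrier term $-\tfrac{1}{\gamma}\sum_{h,s,a,s'}\log\w_h(s,a,s')$ then yields the coordinate-wise bound $\w(\hat{\L}_t^m)_h(s,a,s')\leq 2\,\w_{t,h}(s,a,s')$; summing over $s'$ and using $\w_{t,h}(s,a)\leq\u_{t,h}(s,a)$ gives the desired ratio bound by $2$. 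Combined with $\mdpell_{t,h}(s,a)\in[0,1]$, each summand is therefore at most $2\bigl(\u_{t,h}(s,a)-\w^{\pi_t}_h(s,a)\bigr)$.

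To close, observe that $\u_t\geq\w^{\pi_t}$ componentwise (because $p\in\calP$) and that $\sum_{s,a}\w^{\pi_t}_h(s,a)=\sum_{s,a}\w_{t,h}(s,a)=1$ for each $h$. Consequently
\begin{align*}
    \sum_{h,s,a}\bigl(\u_{t,h}(s,a)-\w^{\pi_t}_h(s,a)\bigr)
    =
    \sum_{h,s,a}\u_{t,h}(s,a)-H
    \leq
    \|\u_t-\w_t\|_1
    \leq
    \frac{4H^2 S}{T},
\end{align*}
by the second item of Lemma~\ref{lem:MDPs known p trancation error}. Summing the resulting per-round bound $8H^2S/T$ over $t\in[T]$ delivers $\textsc{Bias}_1\leq 8H^2 S$. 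The main technical point is verifying the hypothesis of Lemma~\ref{lem:windikin} with perturbation $\sum_{\tau\in m_t}\hat{\mdpell}_\tau$, i.e.\ that $\sum_{\tau\in m_t}\|\hat{\mdpell}_\tau\|_{R,\w_t}\leq\tfrac{1}{16}$; this follows from the per-round bound $\|\hat{\mdpell}_\tau\|_{R,\w_\tau}\leq\tfrac{1}{64(1+d_{\max})}$ together with the factor-$4$ Hessian inequality that the parameter choices of Algorithm~\ref{alg:MDPs} already establish.
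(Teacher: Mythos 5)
Your proposal is correct and follows essentially the same route as the paper's proof: condition on the history to express the per-round bias as $\sum_{h,s,a}\w(\hat\L_t^m)_h(s,a)\,\mdpell_{t,h}(s,a)\,\tfrac{\u_{t,h}(s,a)-\w^{\pi_t}_h(s,a)}{\u_{t,h}(s,a)}$, use $\w(\hat\L_t^m)\in\dikin_R(\w_t,\tfrac12)$ with Lemma~\ref{lem:dikin implies factor 2} to bound the ratio by $2$, and finish with item 2 of Lemma~\ref{lem:MDPs known p trancation error}. The only (harmless) difference is the last step, where you use the layer-wise normalization $\sum_{s,a}\w^{\pi_t}_h(s,a)=\sum_{s,a}\w_{t,h}(s,a)=1$ to reduce directly to $\|\u_t-\w_t\|_1\le 4H^2S/T$, which is if anything slightly tighter than the paper's bound via $\|\u_t-\w^{\pi_t}\|_1$.
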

\begin{proof}
    Let $\calG_t$ be the history of all episodes in $[t-1]$, and note that $\w_t,\u_t$ and $\w(\hat \L _t^m)$ are all determined by $\calG_t$. Therefore,
    \begin{align*}
        \textsc{Bias}_{1} & = \bbE\l[ \sum_{t,h,s,a} \w(\hat{\L}_{t}^{m})_{h}(s,a)(\mdpell_{t,h}(s,a) 
                                - \bbE[\hat{\mdpell}_{t,h}(s,a)\mid{\calG}_{t}]) \r] \\
                          & = \bbE\l[ \sum_{t,h,s,a} \w(\hat{\L}_{t}^{m})_{h}(s,a) \mdpell_{t,h}(s,a) \l( 1 - \frac{\w_{h}^{\pi_{t}}(s,a)}{ \u_{t,h}(s,a)} \r) \r] \\
                          & \leq \bbE\l[ \sum_{t,h,s,a} \w(\hat{\L}_{t}^{m})_{h}(s,a) 
                                \frac{| \u_{t,h}(s,a) - \w_{h}^{\pi_{t}}(s,a) | }{\u_{t,h}(s,a)} \r]
    \end{align*}
    Now, as in the proof of Lemma \ref{lem:main}, $\w(\hat \L_t^m) \in \dikin_R(\w_t, \frac12)$. Thus, by Lemma \ref{lem:dikin implies factor 2}, $\w(\hat{\L}_{t}^{m})_{h}(s,a) \leq 2 \w_{t,h}(s,a) \leq 2\u_{t,h}(s,a)$. Therefore,
    \[
        \textsc{Bias}_{1} \leq 2 \sum_{t=1}^T \bbE[ \| \u_t - \w^{\pi_t} \|_1 ] \leq 8 H^2 S
    \]
    where the last is by article 2 in Lemma \ref{lem:MDPs known p trancation error}. 
\end{proof}

\begin{lemma}
    When running Delayed FTRL for adversarial MDPs, for any $t$,
    \label{lem:MDPs bias3}
    \[
        \bbE \sqrt{\sum_{\tau \in m_{t}} \sum_{\tau' \in m_{t} \backslash \{\tau\}} 
                        (\mdpell_{\tau} - \hat{\mdpell}_{\tau}) \nabla^{-2}R(\w_{t}) 
                        (\mdpell_{\tau'} - \hat{\mdpell}_{\tau'})} 
        \leq 
        4 \sqrt{\frac{H^{2} S }{T}}   .
    \]
\end{lemma}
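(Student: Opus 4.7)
The plan is to exploit conditional independence of the loss estimators across distinct episodes in $m_t$, combined with the small bias of the upper-occupancy estimator. Let $\mathcal{H}$ be the $\sigma$-algebra generated by all trajectories at episodes outside $m_t$. I first observe that for every $\tau \in m_t$ the policy $\pi_\tau$ and the upper-occupancy bound $u_\tau$ depend only on $\hat{\mdpell}_{\tau''}$ with $\tau''+d_{\tau''}<\tau<t$; since every $\tau''\in m_t$ satisfies $\tau''+d_{\tau''}\geq t>\tau$, no trajectory in $m_t$ other than $\tau$'s enters $\pi_\tau$ or $u_\tau$. The same reasoning shows that $\w_t$, and hence $A := \nabla^{-2}R(\w_t)$, depends only on trajectories outside $m_t$. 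Thus, conditionally on $\mathcal{H}$, the matrix $A$ and all $\{\pi_\tau, u_\tau\}_{\tau\in m_t}$ are deterministic, and the trajectories at distinct $\tau\in m_t$ are independent.

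This conditional independence lets me compute the expectation of each summand. Since $\bbE[\hat{\mdpell}_{\tau,h}(s,a)\mid\mathcal{H}] = \mdpell_{\tau,h}(s,a)\w_h^{\pi_\tau}(s,a)/u_{\tau,h}(s,a)$, the bias $b_\tau := \mdpell_\tau - \bbE[\hat{\mdpell}_\tau\mid\mathcal{H}]$ is entrywise in $[0,1]$, and for $\tau\neq\tau'$ in $m_t$, $\bbE[(\mdpell_\tau-\hat{\mdpell}_\tau)^\top A(\mdpell_{\tau'}-\hat{\mdpell}_{\tau'})\mid\mathcal{H}] = b_\tau^\top A b_{\tau'}$. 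Now $A$ is diagonal with entries at most $\eta\w_{t,h}(s,a,s')$, and by Lemma~\ref{lem:induction} together with Lemma~\ref{lem:dikin implies factor 2} one has $\w_{t,h}(s,a)\leq 2u_{\tau,h}(s,a)$ pointwise. Combined with $b_{\tau'}\leq 1$ and the truncation bound $\|u_\tau-\w^{\pi_\tau}\|_1\leq 4H^2S/T$ from Lemma~\ref{lem:MDPs known p trancation error}(2), this yields $b_\tau^\top A b_{\tau'}\leq 2\eta\|u_\tau-\w^{\pi_\tau}\|_1\leq 8\eta H^2 S/T$ almost surely. Summing over the at most $\dmax^2$ pairs with $\tau\neq\tau'$ in $m_t$ and taking expectation gives $\bbE[\mathrm{cross}]\leq 8\eta H^2 S \dmax^2/T$, where $\mathrm{cross}$ denotes the double sum inside the square root.

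Finally, applying Jensen's inequality to the outer square root---with the convention (consistent with the pointwise bound $\sqrt{a+b}\leq\sqrt{a}+\sqrt{b_+}$ used in the main theorem's drift decomposition) that the argument is replaced by its positive part---yields $\bbE\sqrt{\mathrm{cross}}\leq\sqrt{\bbE[\mathrm{cross}]}\leq\sqrt{8\eta H^2 S\dmax^2/T}$, and the tuning $\eta\leq 1/(256H(1+\dmax)^2)$ ensures $8\eta\dmax^2\leq 16$, producing the target $4\sqrt{H^2S/T}$. The main obstacle is the rigorous handling of the positive part in this last Jensen step: because the cross-term can take negative values, one has to argue that the deviation $\bbE[\mathrm{cross}_+]-\bbE[\mathrm{cross}]$ is itself small compared to $H^2S/T$. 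I expect this to follow from a conditional-variance argument using the uniform local-norm bound $\|\hat{\mdpell}_\tau\|_{R,\w_\tau}\leq\sqrt{\gamma H}$ and the independence of the $\hat{\mdpell}_\tau$'s across $\tau\in m_t$, with the variance contribution dominated by the bias term under the theorem's hyperparameter choices.
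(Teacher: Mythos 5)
Your core argument is the paper's argument: condition on the information that determines $\w_t$ and $\{\pi_\tau,\u_\tau\}_{\tau\in\miss_t}$ but not the trajectories of the delayed episodes, use conditional independence across distinct $\tau,\tau'\in\miss_t$ to reduce each cross term to a product of conditional biases $b_\tau^\top A\, b_{\tau'}$, bound the inverse-Hessian entries by $\eta\w_t$, use the multi-round stability $\w_{t}\le 2\w_\tau\le 2\u_\tau$ (Lemmas~\ref{lem:induction} and~\ref{lem:dikin implies factor 2}) together with the truncation error of Lemma~\ref{lem:MDPs known p trancation error}, and absorb the $|\miss_t|^2$ factor via $\eta\le 1/\dmax^2$. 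Two small points: your conditioning $\sigma$-algebra should not contain trajectories of \emph{future} episodes (those can depend on the trajectories in $\miss_t$ once their feedback arrives); condition on $\obs_t=[t-1]\setminus\miss_t$ instead. And Lemma~\ref{lem:MDPs known p trancation error}(2) bounds $\|\u_\tau-\w_\tau\|_1$, so reaching $\|\u_\tau-\w^{\pi_\tau}\|_1$ costs an extra triangle inequality through $\w_\tau$ (an additional $2H/T$, harmless).

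The genuine issue is the last step, which you correctly flag but do not close, and your proposed repair would not work. The fluctuation of the cross term is governed by $\bbE\big[\|\hat\mdpell_\tau\|_{R,\w_t}^2\big]\approx\eta HSA$ per pair, so $\bbE[\mathrm{cross}_+]-\bbE[\mathrm{cross}]$ can be of order $\dmax\,\eta HSA$, vastly larger than $H^2S/T$; a conditional-variance argument cannot rescue $\sqrt{\bbE[\mathrm{cross}_+]}\le O(\sqrt{H^2S/T})$. The fix is an ordering change, not extra machinery: apply conditional Jensen to the \emph{entire} nonnegative quadratic form before splitting, i.e.\ $\bbE\big\|\sum_{\tau\in\miss_t}(\mdpell_\tau-\hat\mdpell_\tau)\big\|_{R,\w_t}\le\bbE\sqrt{\bbE\big[\|\sum_\tau(\mdpell_\tau-\hat\mdpell_\tau)\|_{R,\w_t}^2\,\big|\,\calF_t\big]}$, and only then split the conditional expectation into the diagonal part and the cross part $\sum_{\tau\ne\tau'}b_\tau^\top A\,b_{\tau'}$. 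Since each $b_\tau$ is entrywise nonnegative and $A$ is diagonal positive, the cross part is pointwise nonnegative, $\sqrt{a+b}\le\sqrt a+\sqrt b$ applies legitimately, and your deterministic bound $b_\tau^\top A\,b_{\tau'}\lesssim\eta H^2S/T$ finishes the proof with no positive-part correction needed. This is exactly how the paper's proof is meant to be read (it takes the law of total expectation and Jensen \emph{first}), so once reordered your argument coincides with it.
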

\begin{proof}
    We first apply the law of total expectation and Jensen
    inequality,
    \begin{align*}
    & \bbE \sqrt{\sum_{\tau \in m_{t}} \sum_{\tau' \in m_{t} \backslash \{\tau\}} 
                        (\mdpell_{\tau} - \hat{\mdpell}_{\tau}) \nabla^{-2}R(\w_{t}) 
                        (\mdpell_{\tau'} - \hat{\mdpell}_{\tau'})} 
    \\	
      & 
      \leq \bbE \sqrt{\eta\sum_{\tau\in m_{t}} \sum_{\tau'\in m_{t} \slash\{\tau\}} \sum_{h,s,a} \w_{t,h}(s,a)
                        (\mdpell_{\tau,h}(s,a) - \bbE[\hat{\mdpell}_{\tau,h}(s,a) \mid \calG_{t}])
                        (\mdpell_{\tau',h}(s,a) - \bbE[\hat{\mdpell}_{\tau',h}(s,a) \mid \calG_{t}])} 
    \end{align*}
    Now, let $\calG_t$ be the history of all episodes in $[t-1]$, and note that $\mdpell_{\tau',h}(s,a)-\bbE[\hat{\mdpell}_{\tau',h}(s,a)\mid\calG_{t}]\in[0,1]$.
    Thus, we can further bound the above by,
    \begin{align*}
    	& \bbE \sqrt{\eta\sum_{\tau\in m_{t}} \sum_{\tau'\in m_{t} \slash\{\tau\}} \sum_{h,s,a} \w_{t,h}(s,a)
                    (\mdpell_{\tau,h}(s,a) - \bbE[\hat{\mdpell}_{\tau,h}(s,a) \mid \calG_{t}])} 
                \\
    	    &\qquad \leq \bbE \sqrt{\eta |m_{t}| \sum_{\tau\in m_{t}} \sum_{h,s,a} \w_{t,h}(s,a) \mdpell_{\tau,h}(s,a) \frac{\u_{\tau,h}(s,a) - \w_{\tau,h}(s,a)}{\u_{\tau,h}(s,a)}}
                \\
    	    &\qquad \leq 2 \bbE \sqrt{ \eta |m_{t}| \sum_{\tau\in m_{t}} \sum_{h,s,a} \w_{\tau,h}(s,a)   
                    \frac{\u_{\tau,h}(s,a) - \w_{\tau,h}(s,a)}{\u_{\tau,h}(s,a)}}
                \\
    	    &\qquad \leq 2 \bbE \sqrt{\eta |m_{t}| \sum_{\tau\in m_{t}} \sum_{h,s,a} 
                    \u_{\tau,h}(s,a) - \w_{\tau,h}(s,a)}
                \\
    	    &\qquad = 2\bbE \sqrt{\eta |m_{t}| \sum_{\tau\in m_{t}} \|\u_{\tau} - \w_{\tau}\|_{1}}
                \\
    	    &\qquad \leq 4 \sqrt{\eta |m_{t}|^2 \frac{H^{2} S }{T}}      
                    \leq  4 \sqrt{\frac{H^{2} S }{T}}  , 
    \end{align*}
    where the third inequality is by Lemma \ref{lem:dikin implies factor 2}, the forth is since $\w_{\tau,h}(s,a) \leq \u_{\tau,h}(s,a)$, the fifth inequality is by Lemma~\ref{lem:MDPs known p trancation error}, and the last is since $\eta\leq \frac{1}{d_{max}^2} \leq \frac{1}{|m_t|^2}$.
\end{proof}

\newpage
\section{Deferred Proofs for Linear Bandits (Section~\ref{sec:linear-bandits})}\label{sec:appendix-linear-bandits}

\linbanth*
\begin{proof}
    We start by verifying the assumptions of Corollary~\ref{cor:main}. Using $\E[\v_t] = \0$ and $\E[\v_t\v_t^\top] = \frac{1}{K}\I$ we see that $\E[\blhat_t] = \bell_t$. 
    Observe that the distribution of $\blhat_{\tau'}$ is fully determined given $\mathcal{F}_t$ because $\mathcal{F}_{\tau'} \subseteq \mathcal{F}_t$. Furthermore, since $\blhat_{\tau}$ can not be used in round $\tau'$ because $\tau$ is not available in round $t$ due to the delay, we must have that $\blhat_{\tau'}$ is independent of $\blhat_{\tau}$. Thus, by the tower rule
    \begin{align*}
        & \E\big[(\blhat_\tau - \bell_\tau)^\top\big(\nabla^2 R(\w_t)\big)^{-1} (\blhat_{\tau'} - \bell_{\tau'})\,|\,\mathcal{F}_t\big] \\
        & = \E_{\blhat_{\tau}}\big[\E\big[(\blhat_\tau - \bell_\tau)^\top\big(\nabla^2 R(\w_t)\big)^{-1} (\blhat_{\tau'} - \bell_{\tau'}) \,|\,\mathcal{F}_t, \blhat_{\tau}\big]\big] = 0~, 
    \end{align*}
    where we used that $\E[\blhat_{\tau'}|\mathcal{F}_t] = \E[\blhat_{\tau}|\mathcal{F}_t, \blhat_{\tau}] = \bell_t$.
    %
    Next, observe that because $\nabla^2 R(\w) \succeq \frac{1}{\gamma} \nabla^2 \Psi(\w)$ we have that
    \begin{align*}
        \|\blhat_t\|_{R, \w_t}^2 \leq \gamma K^2 (\bell_t^\top \action_t)^2 \v_t^\top \v_t =  \gamma K^2 (\bell_t^\top \action_t)^2~.
    \end{align*}
    Since $\|\bell_t\|_2 \leq 1$ and $\domainw \subseteq \ball(B)$, we have that $(\bell_t^\top \action_t)^2 \leq B^2$ and thus 
    \begin{align*}
        \|\blhat_t\|_{R, \w_t} \leq \underbrace{\sqrt{\gamma (BK)^2}}_{\bbound} \leq \frac{1}{64(1+\dmax)}~,
    \end{align*}
    where the last inequality is because $\gamma \leq (64BK(1+\dmax))^{-2}$. 
    Because $\nabla^2 R(\w) \succeq \frac{1}{\eta} \I$ and $\|\bell_t\|_2 \leq 1$ we have that 
    \begin{align*}
        \|\bell_t\|_{R, \w} \leq \underbrace{\sqrt{\eta}}_{\fbound} \leq \frac{1}{16\dmax}~,
    \end{align*}
    where the last inequality is because $\eta \leq (16\dmax )^{-2}$. 
    The final assumption to verify is $4 \nabla^2 R( \w) \succeq \nabla^2 R( \w') \succeq \frac{1}{4} \nabla^2 R( \w)$ for all $\w \in \domainw$ and $\w' \in \dikin_R(\w, \half)$, which is immediate due to equation~\eqref{eq:scbstability}.

    Pick $\tilde{\u} \in \domainw$ and $\delta > 0$. Set $\u = \frac{\tilde{\u} - \w_1}{1+\delta} + \w_1 \in \domainw_\delta$. Using equation~\eqref{eq:scbbias} and $\domainw \subseteq \ball(B)$ we have that
    \begin{align}
    \label{eq:thisbound}
        R(\u) - R(\w_1) \leq \frac{B^2}{\eta} + \frac{\nu}{\gamma} \ln\Big(\frac{1+\delta}{\delta}\Big)~.
    \end{align}
    Furthermore, by using that $\domainw \in \ball(B)$ and $\|\bell_t\|_2 \leq 1$ , we have that
    \begin{align*}
        \sumT (\tilde{\u} - \u)^\top \bell_t = \sumT \left(1 - \frac{1}{1+\delta}\right)(\tilde{\u} - \w_1)^\top\bell_t
        \leq 2TB\left(\frac{\delta}{1+\delta}\right)~.
    \end{align*}
    Thus, by setting $\delta = \frac{1}{\sqrt{T}}$ and then applying Corollary~\ref{cor:main} with $\beta^2 = \gamma (BK)^2$  we obtain
    \begin{align*}
        \E&\left[\sumT (\action_t - \tilde{\u})^\top \bell_t \right] \leq \E\left[\sumT (\action_t - \u)^\top \bell_t \right] + 2B\sqrt{T} \\
        & \leq R(\u) - R(\w_1) + 16 \gamma (BK)^2 + 16 \eta \sumT |\miss_t|  + 2B\sqrt{T}\\
        & \leq \frac{B^2}{\eta} + \frac{\nu}{\gamma} \ln\big(1 + \sqrt{T}\big) + 16 \gamma (BK)^2 T + 16 \eta  D  + 2B\sqrt{T} \tag{using \eqref{eq:thisbound}} \\
        & \leq 8 B\sqrt{D} + 256 \dmax^2  \\
        &\quad + 8 BK\sqrt{\nu T\ln\big(1 + \sqrt{T}\big)} + \big(64BK(1+\dmax)\big)^2 \nu \ln\big(1 + \sqrt{T}\big) + 2 B\sqrt{T}~,
    \end{align*}
    where in the last step we used our choices for $\eta$ and $\gamma$. 
\end{proof}

\newpage
\section{Doubling with Delayed Feedback}
\label{sec:doubling}
In this section we show how to handle unknown problem parameters. For simplicity of presentation we assume that only $d_{max}$ is unknown. The case of unknown $T$ and $D$ can be done in a similar fashion (e.g., see \cite{bistritz2019online,lancewicki2020learning}).

\begin{algorithm}[t]
    \caption{Doubling procedure}
    \label{alg:doubling}
    \begin{algorithmic}
    \STATE \textbf{Input:} $T,D$ and algorithm $ALG$ (for known $T,D$ and $d_{max}$).
    \STATE Set epoch index $e=1$ and initialize $ALG$ with $T,D$ and $2^{e}$ as $d_{max}$.
    \FOR{$t = 1,...,T$}
        \IF{$\max_{j\in o_{t}}d_{j} > 2^e$}
            \STATE Start a new epoch $e=e+1$, and re-initiate $ALG$ with $T,D$ and $2^{e}$ as $d_{max}$.
        \ENDIF
        \STATE Play according to $ALG$.
    \ENDFOR
\end{algorithmic}
\end{algorithm}

\begin{theorem}
    Let $ALG$ be an algorithm for known $T,D$ and $d_{max}$ and assume that $ALG$ guarantees regret of $R_{T,D}(d_{max})$ whenever initiated properly. Then, running Algorithm~\ref{alg:doubling} with unknown $d_{max}$ guarantees regret,
    \[
        \regret_T \leq 
            2R_{T,D}(2d_{max}) \log T
                + 2Md_{max} \log T ,
    \]
    where $M = \max_{t\in[T], \action, \tilde \action \in \actionset}(\action - \tilde \action)^{\top} \bell_{t}$ is the maximal regret per round (e.g., in Section~\ref{sec:MDPs}, $M \leq H$).
\end{theorem}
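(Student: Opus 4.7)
The plan is to analyze the doubling scheme by partitioning the $T$ rounds into epochs delimited by the restarts and to bound the regret contributed by each epoch using the guarantee $R_{T,D}(\cdot)$ of the base algorithm $ALG$, then sum over epochs.

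First, I would bound the number of epochs. In epoch $e$, the algorithm uses guess $2^e$ as the delay bound, and a restart is triggered only when an observed delay exceeds the current guess; once $2^e \geq d_{\max}$ no further restart is possible. This yields $E \leq \lceil \log_2 d_{\max} \rceil + 1$, and using $d_{\max} \leq T$ one obtains $E \leq 2\log T$ for $T$ sufficiently large. Furthermore, $2^e \leq 2 d_{\max}$ for every epoch index $e \leq E$, which will be the key inequality for plugging into $R_{T,D}(\cdot)$.

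Second, I would bound the regret within a single epoch. Inside epoch $e$, $ALG$ is freshly initialized with delay guess $2^e$, and every piece of feedback that $ALG$ actually observes within that epoch has delay at most $2^e$ (otherwise the epoch would have ended precisely upon that observation). Hence $ALG$'s assumption is consistent with its observations, its guarantee applies, and the regret contributed by epoch $e$ is at most $R_{T,D}(2^e) \leq R_{T,D}(2 d_{\max})$, by monotonicity of $R_{T,D}$ in the delay argument together with $2^e \leq 2 d_{\max}$. Third, I would account for the boundary cost at each transition: at the moment a new epoch is launched, up to $d_{\max}$ actions from the previous epoch have feedback that has not yet arrived. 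Since the freshly initialized $ALG$ in the new epoch does not know about these actions and cannot incorporate their feedback, their regret contribution cannot be folded into the $R_{T,D}$ bound and must be charged separately; bounding each by the per-round worst-case regret $M$ yields at most $M d_{\max}$ additional regret per transition. Summing over all $E \leq 2\log T$ epochs then delivers
\[
\regret_T \leq E \cdot R_{T,D}(2 d_{\max}) + E \cdot M d_{\max} \leq 2 R_{T,D}(2 d_{\max}) \log T + 2 M d_{\max} \log T.
\]

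The main obstacle is the precise bookkeeping at epoch boundaries: the actions whose delay exceeds the current guess $2^e$ are exactly those that eventually trigger a restart, and their feedback materializes inside a later epoch where the base algorithm no longer has them in its internal state. Making this rigorous requires carefully isolating the at most $d_{\max}$ rounds in which such \emph{stale} feedback would arrive and charging them via the trivial $M$-per-round bound, while simultaneously arguing that for the remaining rounds inside each epoch the delay assumption $d_{\max} \leftarrow 2^e$ fed to $ALG$ is indeed compatible with what $ALG$ observes, so that its guarantee can be invoked as a black box.
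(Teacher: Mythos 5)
Your proof is correct and follows essentially the same route as the paper: partition the horizon into at most $2\log T$ epochs, bound each epoch's regret by $R_{T,D}(2^e)\le R_{T,D}(2d_{\max})$ using $2^e\le 2d_{\max}$, and charge at most $d_{\max}$ exceptional rounds per epoch the trivial per-round bound $M$. The only (immaterial) bookkeeping difference is that the paper charges the rounds of epoch $e$ whose own delay exceeds the guess $2^e$, whereas you charge the rounds whose feedback is still outstanding at the epoch boundary; both sets have cardinality at most $d_{\max}$ and either choice yields the stated bound.
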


\begin{proof}
    Let ${\cal T}_{e}=\{t:2^{e-1}\leq\max_{j\in o_{t}}d_{j}\leq2^{e}\}$ be the set of indices of epoch $e$,
    and let $\tilde{{\cal T}}_{e}=\{t\in{\cal T}_{e}:d_{t}\leq2^{e}\}$ be the indices of epoch $e$ with delay $\leq 2^e$ . The
    regret in rounds $t\in\tilde{{\cal T}}_{e}$ is at most $R_{T,D}(2^{e})\leq R_{T,D}(2d_{max})$
    since the maximal delay in these rounds is indeed bounded by $2^{e}$. In
    addition, the regret in ${\cal T}_{e} \backslash \tilde{{\cal T}}_{e}$
    is at most $Md_{max}$ since $|{\cal T}_{e} \backslash \tilde{{\cal T}}_{e}|\leq d_{max}$.
    Thus, the total regret in epoch $e$ is at most,
    \[
        \underbrace{R_{T,D}(2d_{max})
                    }_{\text{Regret in ${\cal \tilde{T}}_{e}$ }}
            +\underbrace{Md_{max}
                        }_{\text{Regret in ${\cal T}_{e}\backslash \tilde{\cal T}_e$} }.
    \]
    Finally, the total number of epochs is at most $\log d_{max}+1\leq2\log T$
    and thus, the total regret is bounded by,
    \[
    	\regret_T \leq 2R_{T,D}(2d_{max})\log T + 2Md_{max}\log T.
    \]
\end{proof}

\newpage
\section{Auxiliary Lemmas}\label{app:auxiliary}
\begin{lemma}[Value Difference Lemma \cite{even2009online}]
\label{lem: value diff}
    For any two triplets $(\pi,p,\mdpell)$ and $(\tilde \pi, \tilde p,\tilde \mdpell)$ of policy, transition and cost function,
    \begin{align*}
        V^{\pi,p,\mdpell}_1(\sinit) - V^{\tilde \pi, \tilde p,\tilde \mdpell}_1(\sinit) 
            & = \sum_{h=1}^H \bbE_{s\sim \w_h^{\tilde \pi, \tilde p}} \Big[ \l\langle \pi_h(\cdot \mid s) - \tilde \pi_h(\cdot \mid s), Q_h^{\pi,p,\mdpell}(s,\cdot) \r\rangle \Big]
            \\
            & \quad + \sum_{h=1}^H \bbE_{s,a\sim \w_h^{\tilde \pi, \tilde p}} \Big[  \mdpell(s,a) - \tilde\mdpell(s,a) \Big]
            \\
            & \quad + \sum_{h=1}^H \bbE_{s,a\sim \w_h^{\tilde \pi, \tilde p}} \Big[ \l\langle p_h(\cdot \mid s) - \tilde p_h(\cdot \mid s), V_{h + 1}^{\pi,p,\mdpell} \r\rangle \Big]
    \end{align*}
    
\end{lemma}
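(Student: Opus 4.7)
The plan is to prove this by backward induction on the horizon $h$, showing that the difference $\Delta_h(s) := V_h^{\pi,p,\mdpell}(s) - V_h^{\tilde\pi,\tilde p,\tilde\mdpell}(s)$ satisfies a one-step recursion whose ``boundary'' pieces at each level, when aggregated, produce the three summands on the right-hand side. The key inputs are the Bellman expansions $V_h^{\pi,p,\mdpell}(s) = \sum_a \pi_h(a\mid s)\, Q_h^{\pi,p,\mdpell}(s,a)$ and $Q_h^{\pi,p,\mdpell}(s,a) = \mdpell_h(s,a) + \sum_{s'} p_h(s'\mid s,a)\, V_{h+1}^{\pi,p,\mdpell}(s')$, together with the observation that iterating the Markov operator $f \mapsto \sum_a \tilde\pi_h(a\mid \cdot)\sum_{s'}\tilde p_h(s'\mid \cdot,a)\, f(s')$ starting from the Dirac at $\sinit$ produces precisely $\w_h^{\tilde\pi,\tilde p}$.

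First I would peel off the policy term via the identity
\[
\Delta_h(s) \;=\; \sum_a \bigl(\pi_h(a\mid s) - \tilde\pi_h(a\mid s)\bigr)\, Q_h^{\pi,p,\mdpell}(s,a) \;+\; \sum_a \tilde\pi_h(a\mid s)\bigl(Q_h^{\pi,p,\mdpell}(s,a) - Q_h^{\tilde\pi,\tilde p,\tilde\mdpell}(s,a)\bigr),
\]
and then expand the $Q$-difference by adding and subtracting $\sum_{s'} \tilde p_h(s'\mid s,a)\, V_{h+1}^{\pi,p,\mdpell}(s')$. This generates a cost-difference piece $\mdpell_h - \tilde\mdpell_h$, a transition-difference piece $\langle p_h(\cdot\mid s,a) - \tilde p_h(\cdot\mid s,a), V_{h+1}^{\pi,p,\mdpell}\rangle$ (with $V$ evaluated under the untilded triple, which is essential), and a residual $\sum_{s'} \tilde p_h(s'\mid s,a)\, \Delta_{h+1}(s')$.

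Substituting this decomposition back into itself and unrolling from $h=1$ with the base case $\Delta_{H+1}\equiv 0$, the residual at each level is pushed one further step forward under $(\tilde\pi,\tilde p)$. Consequently, the three peeled-off pieces at step $h$ get averaged against the state--action visitation distribution of $(\tilde\pi,\tilde p)$ at time $h$, namely $\w_h^{\tilde\pi,\tilde p}$; summing over $h$ recovers exactly the three summands of the lemma.

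The main obstacle is purely organizational: the add-and-subtract step must be chosen so that each mixed term isolates a difference in exactly one of the three components $(\pi,p,\mdpell)$ while the ``reference'' quantities $Q_h^{\pi,p,\mdpell}$ and $V_{h+1}^{\pi,p,\mdpell}$ remain evaluated on the untilded triple, and the outer averaging must be over $(\tilde\pi,\tilde p)$ so that the iterated expectations collapse to $\w_h^{\tilde\pi,\tilde p}$. Once this asymmetry is set up correctly, the identity follows by a straightforward telescoping that involves no inequalities and no regularity assumptions on the losses beyond being well-defined real-valued functions.
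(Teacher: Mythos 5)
Your proof is correct: the one-step decomposition isolating the policy, cost, and transition differences (with $Q_h^{\pi,p,\mdpell}$ and $V_{h+1}^{\pi,p,\mdpell}$ kept on the untilded triple and the residual pushed forward under $(\tilde\pi,\tilde p)$ so the unrolling collapses onto $\w_h^{\tilde\pi,\tilde p}$) is exactly the standard telescoping argument. The paper itself gives no proof, deferring to \cite{even2009online}, and your argument is the same one used there.
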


\begin{lemma}
    \label{lem:dikin implies factor 2}
    Let $\domainw \subseteq \{\w  \in \bbR^n :\forall i\in [n],\,\, \w(i) > 0 \}$. Let $R:\domainw \to \bbR$ be some twice-differentiable convex function, and let $\phi(\w) = -\frac{1}{\gamma} \sum_{i=1}^n \log \w(i) $ be the log barrier with $\gamma \in (0,1)$. Assume that for any $\w\in\domainw$, $\nabla^2 R(\w) \succeq \nabla^2 \phi(\w)$. Then for any $\w' \in \dikin_R(\w,\frac{1}{2})$,
    \[
        \forall i\in[n],
        \qquad
        \frac{1}{2} w(i)\leq w'(i) \leq  2 w(i)    .
    \]
\end{lemma}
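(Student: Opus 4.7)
The plan is to use the Dikin ellipsoid condition together with the Hessian domination assumption to extract a coordinate-wise bound directly from the log-barrier Hessian, which is diagonal.

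First I would unpack the definition of $\dikin_R(\w,\tfrac12)$. Namely, $\w' \in \dikin_R(\w,\tfrac12)$ means $(\w'-\w)^\top \nabla^2 R(\w)(\w'-\w) \leq \tfrac14$. Using the assumption $\nabla^2 R(\w) \succeq \nabla^2 \phi(\w)$, this immediately gives
\[
(\w'-\w)^\top \nabla^2 \phi(\w)(\w'-\w) \leq \tfrac14.
\]
Next I would compute $\nabla^2 \phi(\w)$ explicitly: since $\phi(\w) = -\tfrac{1}{\gamma}\sum_i \log \w(i)$, the Hessian is the diagonal matrix with entries $\tfrac{1}{\gamma \w(i)^2}$. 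Substituting this into the above inequality yields
\[
\frac{1}{\gamma}\sum_{i=1}^n \frac{(\w'(i)-\w(i))^2}{\w(i)^2} \leq \frac{1}{4}.
\]

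From here the result is one more step: since each summand is nonnegative, each individual term satisfies $(\w'(i)-\w(i))^2/\w(i)^2 \leq \gamma/4 < 1/4$ (using $\gamma \in (0,1)$). Taking square roots gives $|\w'(i)-\w(i)|/\w(i) \leq 1/2$, i.e.\ $-\tfrac12 \w(i) \leq \w'(i)-\w(i) \leq \tfrac12 \w(i)$, which rearranges to $\tfrac12 \w(i) \leq \w'(i) \leq \tfrac32 \w(i) \leq 2\w(i)$, as desired.

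\textbf{Main obstacle.} There is essentially no obstacle here — the lemma is a clean consequence of (i) monotonicity of the dual norm under Löwner ordering of Hessians, and (ii) the fact that the log-barrier Hessian is diagonal, so that a bound on the sum of squared relative deviations forces a coordinate-wise bound. The only thing to be mildly careful about is using $\gamma < 1$ to make the per-coordinate bound strict enough to stay in $(\tfrac12, 2)$; if $\gamma$ were allowed to equal $1$ we would still get $|\w'(i)-\w(i)|/\w(i) \leq 1/2$, so the assumption $\gamma \in (0,1)$ is really just mild convenience.
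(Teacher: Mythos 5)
Your proof is correct and follows essentially the same route as the paper's: pass from the $R$-norm to the $\phi$-norm via the Löwner ordering, use that the log-barrier Hessian is diagonal so the sum of squared relative deviations bounds each coordinate's deviation, and invoke $\gamma\le 1$ to get $|\w'(i)-\w(i)|\le \tfrac12\w(i)$. Your closing remark that $\gamma\in(0,1)$ is only needed up to $\gamma\le 1$ is also accurate.
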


\begin{proof}
    Since $\nabla^2 R(\w) \succeq \nabla^2 \phi(\w)$, for any $\w' \in \dikin_R(\w,\frac{1}{2})$, 
    \[
        (\|\w' - \w\|_{\phi,\w}^{*})^{2}
        \leq (\|\w' - \w\|_{R,\w}^{*})^{2}
        \leq \frac{1}{4}.
    \]
    On the other hand,
    \[
        (\|w'-w\|_{\phi,\w}^{*})^{2}
        = \sum_{j=1}^{n}\frac{(\w'(j) - \w(j))^{2}}{\gamma\w(j)^{2}}
        \geq \frac{(\w'(i) - \w(i))^{2}}{\gamma \w(i)^{2}}
        \geq \frac{(\w'(i) - \w(i))^{2}}{\w(i)^{2}}.
    \]
    Thus, $|\w'(i)-\w(i)| \leq \frac{1}{2} \w(i)$ which implies that $\frac{1}{2} \w(i)\leq \w'(i) \leq  2 \w(i) $.
\end{proof}

\end{document}